\documentclass[twoside]{article}

\renewcommand{\circ}{\odot}
\renewcommand{\vec}{\bar}
\usepackage[accepted]{aistats2026}
\usepackage{times}  % DO NOT CHANGE THIS
\usepackage{helvet}  % DO NOT CHANGE THIS
\usepackage{courier}  % DO NOT CHANGE THIS
\usepackage[hyphens]{url}  % DO NOT CHANGE THIS
\usepackage{graphicx} % DO NOT CHANGE THIS
\usepackage{natbib}  % DO NOT CHANGE THIS AND DO NOT ADD ANY OPTIONS TO IT
\usepackage{caption} % DO NOT CHANGE THIS AND DO NOT ADD ANY OPTIONS TO IT
\usepackage{algorithm}
\usepackage{algorithmic}

\usepackage{newfloat}
\usepackage{listings}
\usepackage[utf8]{inputenc} % allow utf-8 input
\usepackage{booktabs}       % professional-quality tables
\usepackage{amsfonts}       % blackboard math symbols
\usepackage{nicefrac}       % compact symbols for 1/2, etc.
\usepackage{microtype}      % microtypography
\usepackage[dvipsnames]{xcolor}         % colors
\usepackage{amsmath,amsthm,amsfonts,amssymb}

\usepackage{glossaries}
\usepackage{siunitx}
\usepackage{graphicx}
\usepackage{tcolorbox}

\usepackage{multicol}
\usepackage{multirow}
\usepackage{pifont}

\usepackage{caption}
\usepackage{subcaption}

\usepackage{enumitem}

\usepackage{adjustbox}

\usepackage[normalem]{ulem}

\newcommand{\expect}{\mathbb{E}}

\newcommand{\py}{\mathbf{y}}
\newcommand{\pQ}{\mathbf{Q}}
\newcommand{\pL}{\mathbf{L}}
\newcommand{\pR}{\mathbf{R}}
\newcommand{\pA}{\mathbf{A}}

\newcommand{\pE}{\mathbf{E}}
\newcommand{\pM}{\mathbf{M}}

\newcommand{\pphi}{\boldsymbol{\phi}}
\newcommand{\ppsi}{\boldsymbol{\psi}}

\newcommand{\pxi}{\boldsymbol{\xi}}

\newcommand{\pv}{\mathbf{v}}

\newcommand{\px}{\mathbf{x}}

\newcommand{\pz}{\mathbf{z}}
\newcommand{\pb}{\mathbf{b}}

\newcommand{\tf}{\tilde{f}}

\newcommand{\erf }{\text{erf}}

\newcommand{\reals}{\mathbb{R}}
\newcommand{\uspace}{\mathbb{U}}
\DeclareFontFamily{U}{mathx}{\hyphenchar\font45}
\DeclareFontShape{U}{mathx}{m}{n}{
      <5> <6> <7> <8> <9> <10>
      <10.95> <12> <14.4> <17.28> <20.74> <24.88>
      mathx10
      }{}
\DeclareSymbolFont{mathx}{U}{mathx}{m}{n}
\DeclareMathSymbol{\bigtimes}{1}{mathx}{"91}
\newcommand{\sos}{SoS}
\newcommand{\fspace}{\mathcal{C}}
\newcommand{\sosfuncs}{\Sigma[\fspace]}
\newcommand{\psd}{\succeq 0}
\newcommand{\posdef}{\succ 0}
\newcommand{\finnerprod}[2]{\big\langle #1, #2 \big\rangle}

\newcommand{\diag}{\text{diag}}

\newcommand{\edt}[1]{\textcolor{black}{#1}}
\newcommand{\rmv}[1]{}
\theoremstyle{definition}
\newtheorem{definition}{Definition}
\newtheorem{problem}{Problem}
\newtheorem{assumption}{Assumption}

\theoremstyle{plain}
\newtheorem{theorem}{Theorem}

\newtheorem{lemma}{Lemma}

\theoremstyle{plain}
\newtheorem{remark}{Remark}

\begin{document}

% If your paper is accepted and the title of your paper is very long,
% the style will print as headings an error message. Use the following
% command to supply a shorter title of your paper so that it can be
% used as headings.
%
%\runningtitle{I use this title instead because the last one was very long}

% If your paper is accepted and the number of authors is large, the
% style will print as headings an error message. Use the following
% command to supply a shorter version of the author names so that
% they can be used as headings (for example, use only the surnames)
%
%\runningauthor{Surname 1, Surname 2, Surname 3, ...., Surname n}

\twocolumn[

\aistatstitle{Learning Markov Processes as Sum-of-Square Forms for Analytical Belief Propagation}

\aistatsauthor{ Peter Amorese \And Morteza Lahijanian}

\aistatsaddress{ University of Colorado Boulder \And  University of Colorado Boulder} ]

\begin{abstract}
Harnessing the predictive capability of Markov process models requires propagating probability density functions (beliefs) through the model. For many existing models however, belief propagation is analytically infeasible, requiring approximation or sampling to generate predictions. This paper proposes a functional modeling framework leveraging sparse Sum-of-Squares (SoS) forms for valid (conditional) density estimation. We study the theoretical restrictions of modeling conditional densities using the SoS form, and propose a novel functional form for addressing such limitations.
The proposed architecture enables generalized simultaneous learning of basis functions and coefficients, while preserving analytical belief propagation.
In addition, we propose a training method that allows for exact adherence to the normalization and non-negativity constraints. Our results show that the proposed method achieves accuracy comparable to state-of-the-art approaches while requiring significantly less memory in low-dimensional spaces, and it further scales to 12D systems when existing methods fail beyond 2D.
\end{abstract}

\section{INTRODUCTION}
Machine learning models offer a flexible and useful means of predicting real world processes. Due to either uncertainty, or stochastic environmental disturbances, such processes are often stochastic in nature. Models that account for this stochasticity, can significantly improve the quality and robustness of predictions, making them more reliable. However, propagating uncertainty in the state of the process through the model, i.e., \textit{belief propagation}, is often analytically infeasible for continuous state-spaces, thus requiring sampling or approximations. Even with the most accurate models, the need for approximation can cause prediction quality and interpretability to suffer. This work studies a novel functional form for learning general stochastic processes, while permitting analytical belief propagation. 
% The approach leverages sparsity in functional optimization to scale to higher dimensions and maintain representational capacity. 

Belief propagation is the process of computing the predicted marginal distribution (belief) of the state of the system at a future time. For continuous state systems, a belief belongs to a \textit{function-space} which is generally infinite-dimensional. Certain simple processes, in particular, linear systems with additive Gaussian noise, permit analytical belief propagation. However, such systems are often not adequate for describing many real world processes. Deviation from either the linear assumption or the additive Gaussian noise assumption typically makes belief propagation subject to intractable integrals \citep{jasour2021moment}. 

% \qh{this paragraph needs to not just list the methods, needs overall limitation}
The standard means of mitigating this issue is through approximation. A prominent example is non-linear filtering, in which linearization-based methods \citep{schei1997finite} (such as the Extended Kalman Filter) and second-order approximations have been proposed \citep{julier2004unscented}. Building upon such foundational approaches, Gaussian Mixture Model- (GMM) based approaches account for multi-modal approximation of the belief \citep{alspach2003nonlinear, figueiredo2024uncertainty,kulik2024nonlinearity}. 
% \qh{remove this following paragraph, add kulik2024nonlinearity to the previous citation?} Here, component-splitting techniques can improve the fidelity of GMM predictions, at the cost of adding (exponentially) more mixture components each timestep \citep{kulik2024nonlinearity}. 
Not only are these methods approximate, they make restrictive assumptions about the underlying Markov process, such as additive Gaussian noise.

In contrast, sampling-based methods, such as particle filters \citep{arulampalam2002tutorial}, can leverage generative models \citep{jonschkowski2018differentiable} and propagate particles through the model to obtain a particle approximation of the belief. Nonetheless, these methods often require a large number of samples for high-dimensional systems. Furthermore, reconstructing a probability density from a particle set involves post-processing through density estimation techniques, e.g., Kernel Density Estimation \citep{parzen1962estimation}. Other approaches, such as moment-based methods \citep{jasour2021moment} can exactly propagate the moments of a belief for a class of dynamical systems; however, moments are generally insufficient for reconstructing the full belief \citep{akhiezer2020classical}. 
% \qh{Something like: These approximation methods either cannot propagate beliefs tractably or analytically}

\citet{amorese2026universal} recently proposed a method of solving the modeling problem for analytical belief propagation. The Markov process is modeled using Bernstein-polynomial Normalizing Flows (BNF), leveraging the flexible analytical characteristics of polynomials to perform belief propagation. The Bernstein polynomial basis is a \textit{partition of unity} (i.e., a set of non-negative functions that sum to one everywhere), a unique and rare characteristic that allows Bernstein Normalizing Flow models to model valid Markov processes. Unfortunately, by definition, the Bernstein basis is \textit{dense}, i.e., the number of required non-zero parameters is exponential in the dimension of the system. Consequently, the time cost of computing future beliefs is also exponential, preventing BNF from scaling beyond two-dimensional systems.

Sum-of-squares (\sos{}) forms are a powerful functional for modeling non-negativity, and have been used for tractable (conditional) density modeling \citep{loconte2025sum, rudi2021psd}, and have been shown to be more expressive than mixture models \citep{marteau2020non, loconte2023subtractive}. Modeling Markov processes using the proposed conditional density model (e.g. in \cite{rudi2021psd}) yields a rational function, which does not permit analytical belief propagation.

This work studies a novel functional form for modeling general Markov processes using \sos{} forms that, by construction, permits analytical belief propagation. 
We leverage Sum-of-Squares theory to form a valid \textit{conditional} density estimator for which one can train not only the coefficients, but also the \textit{basis functions} themselves.
%allow for optimization of the basis functions themselves. 
Similar to mixture models, this freedom allows the optimization to intelligently tune and allocate basis functions to achieve a sparse and accurate representation of the underlying process. 
We study the theoretical implications, limitations, and advantages of the \sos{} functional form, substantiated through experimental results.

The key contributions of this work are as follows:
\begin{itemize}
    \item a theoretical analysis of the consequences of using \sos{} for conditional density estimation, proving an important result about its restrictions under mild assumptions, 
    \item a novel functional form that alleviates the restrictions of \sos{}, without sacrificing any desireable theoretical attributes,
    \item a means of enforcing the valid-distribution constraints through direct parameterization, and
    \item experimental comparisons and demonstrations of the efficacy of the proposed approach for modeling high-dimensional systems.
\end{itemize}

\subsection*{Notation}
% The open unit box is denoted $\uspace^d = (0, 1)^d$.
% Vectors are denoted in bold, e.g., $\px \in \uspace^d$.  The $i$-th element of vector $\px$ is denoted by $x_i$.
% The set of all multivariate functions that are continuous and real-valued on $\uspace^d$ as $\fspace(\uspace^d) \subset \fspace(\reals^d)$.
% Probability density functions over random vectors $\px$ are denoted as $p(\px)$.
% A Positive Semi-Definite (or Positive Definite) matrix $\pA$ is denoted as $\pA \psd$ (resp. $\pA \posdef$). 
% The inner product of two functions $f, g$ is denoted 

We denote the open unit box by $\uspace^d = (0,1)^d$.
Vectors are written in bold, e.g., $\px \in \uspace^d$, with the $i$-th element of $\px$ denoted by $x_i$.
The set of all real-valued continuous multivariate functions on $\uspace^d$ is denoted by $\fspace(\uspace^d) \subset \fspace(\reals^d)$.
Probability density functions over random vectors $\px$ are written as $p(\px)$.
A positive semi-definite (resp. positive definite) matrix $\pA$ is denoted by $\pA \psd$ (resp. $\pA \posdef$).
The inner product of two functions $f,g$ is defined as
$\langle f, g \rangle = \int_{\uspace^d} f(\px) g(\px) d\px$.
The Hadamard (element-wise) product of two matrices (or vectors) $\pA$ and $\mathbf{B}$ is denoted by $\pA \circ \mathbf{B}$.
% The Hadamard (element-wise) product of two matrices (or vectors) $\pA$ and $\mathbf{B}$ is denoted $\pA \circ \mathbf{B}$.
\section{PROBLEM FORMULATION} \label{sec:pf}

In this work, we consider a general discrete-time stochastic process that evolves in $\reals^d$. Our goal is to study the propagation of its Probability Density Function (PDF), also referred to as the \emph{belief}, solely by using data. For simplicity of presentation and without loss of generality, we assume that the state space is transformed to the open unit box $\uspace^d$.\footnote{Any random variable in $\reals^d$ can be mapped to $\uspace^d$ via diffeomorphisms (such as $\erf(\cdot)$) without inhibiting integration capability. For details, see \cite{amorese2026universal}.}

Let $\px_k \in \uspace^d$ denote the (transformed) state of the stochastic process at time step $k \in \mathbb{N}^0$, with associated PDF (belief) $p(\px_k)$. Furthermore, let $\vec{\px} = \px_0, \px_1, \ldots, \px_K$ with $K \in \mathbb{N}^0$ represent a sequence of states generated by the process. We assume that the process is Markov, i.e., the state evolution is conditionally dependent only on the previous state, so that transitions follow the time-invariant distribution $p(\px_k \mid \px_{k-1})$ (also denoted $p(\px' \mid \px)$).
Consequently, the process is uniquely defined by the initial belief $p(\px_0)$ and the state-transition distribution $p(\px_k \mid \px_{k-1})$ for $k = 1,\ldots,K$.

In this work, we focus on the scenarios where the Markov process is \emph{unknown} and must be learned from data. To this end, we assume two sets of data are given: initial state data $\mathcal{D}_{\px_0} = \{(\hat{\px}_0)_i\}_{i=1}^{N_0}$ and state-transition data $\mathcal{D}_{\px'} = \{(\hat{\px}, \hat{\px}')_i\}_{i=1}^N$, where $\hat\px \in \uspace^d$ is a point and $\hat\px' \in \uspace^d$ is a realization of the one-step evolution of the process from $\hat\px$.

To learn the process, we consider a parametric density function $p_\theta(\px_0)$ and conditional density function $p_\theta(\px_k \mid \px_{k-1})$, where parameters $\theta$ must be learned from $\mathcal{D}_{\px_0}$ and $\mathcal{D}_{\px'}$.
Equipped with $p_\theta(\px_0)$ and $p_\theta(\px_k \mid \px_{k-1})$, inference of future beliefs can be performed via recursive \textit{belief propagation}:
\begin{align} \label{eq:marg}
    p_\theta(\px_k) = \int_{\px_{k-1}} p_\theta(\px_k \mid \px_{k-1}) p_\theta(\px_{k-1}) d\px_{k-1}
\end{align}
starting from $p_\theta(\px_0)$. However, for many classes of conditional density estimators, the integral in Equation~\eqref{eq:marg} is analytically intractable. 

This paper tackles this challenge by choosing a functional form for learning $p_\theta(\px_0)$ and $p_\theta(\px' \mid \px)$ subject to the following three criteria:
\begin{enumerate} [label=C\arabic*]
    \item\hspace{-2mm}. \textbf{Representational capacity}: with enough parameters, the distributions can approximate the true distribution $p(\px_0)$ and conditional distribution $p(\px' \mid \px)$ arbitrary well. \label{cond:universal}
    \item\hspace{-2mm}. \textbf{Analytical belief propagation}: Equation~\eqref{eq:marg} can be computed exactly. \label{cond:analytical}
    \item\hspace{-2mm}. \textbf{Sparse parameter representation}: the model can be stored with polynomial memory complexity with respect to the dimension $d$. \label{cond:sparse}
\end{enumerate}

This Markov Process learning problem can be formally stated as follows.
\begin{problem} \label{prob}
    Given datasets $\mathcal{D}_{\px_0}$ and $\mathcal{D}_{\px'}$ generated by a Markov process and a time horizon $K \in \mathbb{N}$, learn $p_\theta(\px_0)$ and $p_\theta( \px' \mid \px)$ that adheres to Conditions~\ref{cond:universal}-~\ref{cond:sparse} and compute $p_\theta(\px_K)$.
\end{problem}

Describing a class of models that adheres to all three conditions is a very challenging problem. For instance, Gaussian-mixture models with linear conditional dependence satisfy~\ref{cond:analytical} and~\ref{cond:sparse} at the cost of very limited representational capacity. Generative models satisfy~\ref{cond:universal} and~\ref{cond:sparse}, but require approximation or sampling to generate predictions. Lastly, dense Bernstein polynomial models satisfy~\ref{cond:universal} and~\ref{cond:analytical}, but struggle to scale due to exponential blow-up in parameters, violating \ref{cond:sparse}.

%Ensuring that analytical belief propagation is feasible requires special functional forms.
Problem \ref{prob} is posed as a constrained functional optimization problem. Our approach is to restrict the class of functions to  expressive Sum-of-Squares (\sos{}) forms that emit tractable normalization and non-negativity constraints as well as analytical belief propagation. In Section~\ref{sec:background}, we provide an overview of \sos{} functions and their properties and then detail our approach in Section~\ref{sec:approach}.

For the remainder of the paper, we drop the subscript $\theta$ and assume densities $p(\cdot)$ refer to the parameterized model.
%A detailed proof of Lemma \ref{lem:fixed_point} is provided in the Appendix.

%Specifically, a valid Conditional Density Estimator (CDE) $f(\px_{k-1}, \px_k) = p_\theta(\px_k | \px_{k-1})$ must adhere to the following two properties:
%\begin{align}
%    f(\px, \py) &\geq 0 \; &\forall \px, \py \in \uspace^d \label{eq:nonneg} \\
%    \int_{\uspace^d} f(\px, \py) d\py &= 1 \; &\forall \px \in \uspace^d. \label{eq:norm} 
%\end{align}

%\pa{Talk about challenges}
%i) analytical belief propagation and ii) sparse parameter representation.

\section{BACKGROUND: SUM-OF-SQUARES}
\label{sec:background}
To provide the necessary background for our methodology, we first review the key principles of Sum-of-Squares theory. Consider a vector of $n$ basis functions $\pb(\px) = [b_1(\px), ..., b_n(\px)]^T \in \fspace(\uspace^d)^n$. 

\begin{definition}[Sum-of-Squares]
    A function $f \in \fspace(\uspace^d)$ is \emph{Sum-of-Squares} iff $f$ can be written as $f(\px) = \pb^T(\px) \pQ \pb(\px)$ such that $\pQ \in \reals^{d\times d}$ is positive semi-definite (PSD), i.e., $\pQ \psd$. The set of all \sos{} functions is denoted $\sosfuncs \subset \fspace(\uspace^d)$.
\end{definition}

It is straightforward to show that if $f \in \sosfuncs$ then $f \geq 0$ for all $\px \in \uspace^d$.
Specifically, since $\pQ \psd$, it can be factored as $\pQ = \pL^T \pL$ where $\pL \in \reals^{d \times d}$. Then, $f(\px) = (\pL\pb(\px))^T (\pL\pb(\px))$, which is the inner product of a vector with itself.

Ensuring that a function is non-negative over a domain is a NP-hard problem in general \citep{parrilo2003semidefinite}, making functional optimization over non-negative functions intractable.
\sos{} forms offer a \textit{relaxation} of non-negative functions via a tractable PSD constraint. Additionally, \sos{} forms possess rich representational properties for some choices of basis functions \citep{putinar1993positive, nie2007complexity}. Given a fixed set of basis functions, optimization over non-negative functions is relaxed to optimization over PSD matrices, as is done in Semi-Definite Programming (SDP) \citep{vandenberghe1996semidefinite}. We employ techniques from \sos{} theory to optimize over density functions, where ensuring non-negativity is essential.

%\subsection{Conditional Density Estimation}
%This section reviews the challenges of designing a conditional density estimator, with a focus on partition of unity. Show that partition of unity is required. Can also discuss why Bernstein fits this mold. \pa{part of PF}

\section{SUM-OF-SQUARE FORM DENSITY ESTIMATION}
\label{sec:approach}
%Discuss the proposed \sos{} form for functional modeling - discuss non-negativity properties, analytical integrability properties, and PSD conditions.

To approach Problem~\ref{prob}, 
we propose models of the (conditional) density estimators as \sos{} forms. The benefits of this are three-fold. Firstly, \sos{} forms allow for rich non-negative functional representation (\ref{cond:universal}). Second, with basis functions belonging to a class of functions with attractive closed-form-integrable properties, belief propagation in Equation~\eqref{eq:marg} can be performed analytically (\ref{cond:analytical}). Lastly, general \sos{} forms do not put any restrictions (such as non-negativity) on the type or quantity of basis functions themselves, allowing for sparse representations (\ref{cond:sparse}). 

For the remainder of this section, we focus on conditional density estimators (CDE). Let $f : \uspace^d \times \uspace^d \rightarrow \reals_{\geq 0}$
denote a (multivariate) function representation of the conditional distribution $f(\px, \px') = p(\px' \mid \px)$. For $f$ to be a valid conditional distribution, two fundamental properties must hold:
\begin{align}
    f(\px, \px') &\geq 0 &&\forall \px, \px' \in \uspace^d, \label{eq:nonneg} \\
    \int_{\uspace^d} f(\px, \px') d\px' &= 1  &&\forall \px \in \uspace^d. \label{eq:norm} 
\end{align}
Let $f$ be a \sos{} form 
\begin{equation} \label{eq:fsos}
    f(\px, \px') = \pb^T(\px, \px') \; \pQ \; \pb(\px, \px').
\end{equation}
By construction, if $\pQ \psd$, then Equation~\eqref{eq:nonneg} is satisfied. 
% The challenge however is in satisfying the property in Equation~\eqref{eq:norm}.
% In fact, we can show that the standalone functional form \eqref{eq:fsos} is actually fundamentally unable to satisfy \eqref{eq:norm} for non-trivial cases.
The difficulty lies in ensuring that the  condition in Equation~\eqref{eq:norm} also holds. 
In fact, in Section~\ref{sec:degen}, we study the inherent restrictions of \eqref{eq:fsos}, when satisfying the normalization condition in \eqref{eq:norm}.

\subsection{Conditional Normalization}
The models proposed in this work revolve around the ability to perform exact analytical integrals of the density. Analytical integration is crucial to formulating feasible normalization constraints, as well as analytical belief propagation.
To study the normalization criterion in Equation~\eqref{eq:norm}, we must first formally define an algebra of basis functions which possess closed-form anti-derivatives.
\begin{definition} [Integrable Multiplicative-Algebra]
    An \textit{Integrable Multiplicative-Algebra} is a class of functions $\mathcal{A} \subset \fspace(\uspace^d)$ such that (i) for any $b_i, b_j \in \mathcal{A}$, 
    \begin{equation}
        b_i(\px) b_j(\px) \in \mathcal{A} \qquad \text{ (multiplicative closure),}
    \end{equation}
    and (ii) for every $b(\px) \in \mathcal{A}$, its antiderivative $B(\px)$ is known in closed form, i.e., 
    \begin{equation}
        \frac{\partial}{\partial \px_1} \cdots \frac{\partial}{\partial \px_d} B(\px) = b(\px).
    \end{equation}
\end{definition}
There are many known integrable multiplicative-algebras. A non-exhaustive list is: polynomials with integer or real exponents, trigonometric functions with linear arguments, exponential functions with linear arguments, Gaussian kernels, Beta-PDFs, etc.

Therefore, restricting the family of basis functions to the integrable multiplicative-algebra facilitates exact analytical integration.
%We study basis functions that belong to an integrable multiplicative-algebra $\mathcal{A}$ to permit exact integration. 
Additionally, we make one more structural assumption on the basis functions to allow us to understand functional properties while remaining general to arbitrary choices of basis function families.

\begin{assumption} \label{assumption}
    The basis functions are separable in the dependent variable $\px'$ and conditioner variable $\px$. Namely, $\pb(\px, \px') = \pphi(\px) \circ \ppsi(\px')$, for some arbitrary $\pphi$ and $\ppsi$
    such that $b_i(\px, \px') = \phi_i(\px) \psi_i(\px')$. 
\end{assumption}
At first glance, it may seem that Assumption \ref{assumption} is restrictive; however, monomial basis functions (a common choice for \sos{} expressive functional optimization) are separable in all variables. 
Moreover, Assumption \ref{assumption} does \textit{not} require $\pphi$ and $\ppsi$ to be separable in the components of the state, e.g., $\phi(x_1, x_2)$ 
does not
need to be equal to $\phi_1(x_1) \phi_2(x_2)$.
%Additionally, we make no assumptions about the separability between components of the random variables, e.g. between $x_i$ and $x_j$. \qh{this paragraph seems important but pretty unclear}

To formulate the normalization criterion in Equation~\eqref{eq:norm}, we can rewrite Equation~\eqref{eq:fsos} in a double-sum representation
\begin{align}
    \int_{\uspace^d} f(\px, &\px') d\px'  \notag \\
    &=\int_{\uspace^d} \sum_{i=1}^n \sum_{j=1}^n q_{i,j} \phi_i(\px)\psi_i(\px') \phi_j(\px) \psi_j(\px') d\px'  \notag \\
    &=\sum_{i=1}^n \sum_{j=1}^n q_{i,j} \phi_i(\px)\phi_j(\px) \finnerprod{\psi_i}{\psi_j} \label{eq:dub_sum_int}
\end{align}
Let $\Gamma$ be the Gram matrix of $\boldsymbol{\psi}(\cdot)$ where each element $\gamma_{i,j} = \finnerprod{\psi_i}{\psi_j}$. The following property holds.

\begin{lemma} \label{lem:int_y}
    If $f(\px, \px')$ is a \sos{} form then $\int_{\uspace^d} f(\px, \px') d\px'$ is \textit{also} a \sos{} form.
\end{lemma}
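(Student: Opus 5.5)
The plan is to read off an explicit \sos{} representation from the double-sum expression \eqref{eq:dub_sum_int}. Under Assumption~\ref{assumption}, that computation gives
\begin{equation*}
    \int_{\uspace^d} f(\px, \px') d\px' = \sum_{i=1}^n \sum_{j=1}^n q_{i,j}\gamma_{i,j}\, \phi_i(\px)\phi_j(\px) = \pphi^T(\px)\,(\pQ \circ \Gamma)\,\pphi(\px).
\end{equation*}
So the marginal has exactly the quadratic form structure of the definition, with basis vector $\pphi(\px)$ and coefficient matrix $\pQ \circ \Gamma$. It remains to show that $\pQ \circ \Gamma \psd$.

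First I will check that $\Gamma \psd$. Since $\Gamma$ is the Gram matrix of $\ppsi$ with respect to the inner product $\langle\cdot,\cdot\rangle$ on $\uspace^d$, for any $\mathbf{v} \in \reals^n$ we have
\begin{equation*}
    \mathbf{v}^T \Gamma \mathbf{v} = \int_{\uspace^d} \Big(\sum_i v_i \psi_i(\px')\Big)^2 d\px' \geq 0.
\end{equation*}
This requires the entries $\gamma_{i,j}$ to be finite. That holds because each $\psi_i\psi_j$ lies in the integrable multiplicative-algebra, so the integrals exist in closed form. Alternatively, one can assume the $\psi_i$ are square integrable.

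Next I will invoke the Schur product theorem: the Hadamard product of two PSD matrices is PSD. Since $\pQ \psd$ and $\Gamma \psd$, this gives $\pQ \circ \Gamma \psd$. To keep the argument self-contained, I can prove this directly. Write $\pQ = \pL^T\pL$ with rows $\mathbf{l}_k$ of $\pL$. Then
\begin{equation*}
    \mathbf{v}^T(\pQ\circ\Gamma)\mathbf{v} = \sum_k (\mathbf{v}\circ \mathbf{l}_k)^T \Gamma (\mathbf{v}\circ\mathbf{l}_k) \geq 0.
\end{equation*}
Equivalently, the marginal equals $\int \|\pL(\pphi(\px)\circ\ppsi(\px'))\|^2 d\px'$, which is an integral of squares.

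The only step with real content is the PSD-ness of the Hadamard product; everything else is bookkeeping. One detail to confirm at the end is membership in $\fspace(\uspace^d)$. The marginal is a finite combination of the continuous functions $\phi_i\phi_j$, so it is continuous, and the dimension of $\pQ\circ\Gamma$ is $n\times n$, matching the basis $\pphi$.
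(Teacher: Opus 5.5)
Your proposal is correct and follows the paper's proof: rewrite the marginal as $\pphi^T(\px)(\Gamma\circ\pQ)\pphi(\px)$, note that $\Gamma$ is PSD because it is a Gram matrix, and conclude with the Schur product theorem. Your self-contained check of the Schur step via $\pQ=\pL^T\pL$ and your remark that the entries $\gamma_{i,j}$ must be finite (which the paper leaves implicit) are useful additions, but they do not change the argument.
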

\begin{proof}
    $\Gamma$ is a Gram matrix which is guaranteed to be PSD. Substituting $\gamma_{i, j}$, \eqref{eq:dub_sum_int} can be re-expressed as $\pphi^T(\px) \big(\Gamma \circ \pQ \big) \pphi(\px)$. By the Schur product theorem, $\Gamma \circ \pQ$ is PSD. 
\end{proof}

%\subsection{Existence of \sos{}-form CDE \qh{Degeneracy of \sos{}-form CDE?}}
\subsection{Restrictions of \sos{}-form CDEs} \label{sec:degen}
Lemma \ref{lem:int_y} describes a key property of the proposed \sos{} form: integrating out the dependent variable induces another \sos{} form in just the $\px$-basis. This section highlights a counter-intuitive implication of this result. Despite having rich functional approximation properties, \edt{the \sos{} form in \eqref{eq:fsos} requires a highly restrictive structure when modeling a valid conditional distribution.}
%can \textit{never} model a non-trivial CDE with linearly independent $\px-\px'$ separable basis functions.

Accounting for the symmetry in $\pQ$, 
Equation~\eqref{eq:norm} can be rewritten as
%\eqref{eq:dub_sum_int} can be re-written as 
\begin{equation} \label{eq:dub_sum_unique_terms}
    \sum_{i=1}^n q_{i,i} \gamma_{i, i} \phi_i^2(\px) + \sum_{i=1}^n \sum_{j=i+1}^n 2 q_{i, j} \gamma_{i, j} \phi_i(\px) \phi_j(\px) = 1.
\end{equation}
Let $\mathbf{B}(\px) = \pb(\px) \pb^T(\px)$ and $\pb_\triangle(\px)$ denote the vector of all upper triangular elements of $\mathbf{B}(\px)$.

\begin{lemma} \label{lem:lin_ind}
    For Equation~\eqref{eq:dub_sum_unique_terms} to hold, either (i) $q_{i, j} = 0$ for all non-constant basis function products $\phi_i(\px) \phi_j(\px)$, or (ii) there is linear dependence between at least two $\phi_i(\px)\phi_j(\px)$ and $\phi_k(\px)\phi_l(\px)$.
\end{lemma}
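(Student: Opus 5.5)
The plan is to read Equation~\eqref{eq:dub_sum_unique_terms} as a single linear relation among the functions in $\pb_\triangle(\px)$ (here $\pb$ means $\pphi$) together with the constant function $1$, and then split on whether that relation is trivial. Write the left-hand side as $\mathbf{c}^T \pb_\triangle(\px)$. The coefficient vector $\mathbf{c}$ has entries $c_{ii} = q_{i,i}\gamma_{i,i}$ and $c_{ij} = 2 q_{i,j}\gamma_{i,j}$ for $i<j$. The equation then says $\mathbf{c}^T \pb_\triangle(\px) - 1 \equiv 0$ on $\uspace^d$. I will prove the contrapositive: assume (ii) fails, so the products $\phi_i\phi_j$ are pairwise linearly independent, and show that (i) must hold.

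\textbf{First, split the products into constant and non-constant ones.} Let $S$ be the set of index pairs $(i,j)$, $i\le j$, for which $\phi_i\phi_j$ is a nonzero constant. Under the negation of (ii), at most one product can be a nonzero constant, since any two nonzero constants are linearly dependent. The same argument excludes products that vanish identically, because a zero function is dependent with anything. With these observations, the identity rearranges to
\begin{equation*}
\sum_{(i,j)\notin S} c_{ij}\,\phi_i(\px)\phi_j(\px) = 1 - \sum_{(i,j)\in S} c_{ij}\,\phi_i\phi_j \equiv \text{const}.
\end{equation*}

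\textbf{Second, conclude that the coefficients on non-constant products vanish.} Here I interpret the negation of (ii) in the intended sense that the family $\{\phi_i\phi_j\}$, together with the constant function when it is not already a member, is linearly independent. A non-trivial combination of non-constant products equal to a constant would be exactly a linear dependence among the products and $1$, which is excluded. Hence $c_{ij}=0$ for every $(i,j)\notin S$, that is, $q_{i,j}\gamma_{i,j}=0$ for all non-constant products. The constant part then matches $1$, which fixes the remaining $q$ on $S$.

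\textbf{Third, pass from $q_{i,j}\gamma_{i,j}=0$ to $q_{i,j}=0$.} This is the step I expect to be the main obstacle. When $\gamma_{i,j}\neq 0$ it is immediate. For the diagonal, $\gamma_{i,i}=\|\psi_i\|^2>0$ unless $\psi_i\equiv 0$, and in that case the basis function $b_i$ is zero and can be discarded. So every diagonal $q_{i,i}$ attached to a non-constant $\phi_i^2$ is zero. Positive semidefiniteness of $\pQ$ then forces the whole $i$-th row and column to vanish, because $q_{i,j}^2 \le q_{i,i}q_{j,j}$. This kills every off-diagonal $q_{i,j}$ with $\phi_i$ non-constant, even when $\psi_i\perp\psi_j$. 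Off-diagonal entries with both $\phi_i$ and $\phi_j$ constant give constant products, so they are not covered by (i), and nothing further is needed for them.

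I will state the independence interpretation explicitly at the start of the proof, since the argument relies on it. The delicate points to handle are the orthogonal-$\psi$ case, which is dealt with through the PSD argument, and the inclusion of the constant function in the independence assumption.
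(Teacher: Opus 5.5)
Your argument is correct and rests on the same core idea as the paper's proof: under an independence hypothesis on the products $\phi_i\phi_j$, the coefficients $q_{i,j}\gamma_{i,j}$ attached to non-constant products must vanish. The execution differs in ways that favor your version.

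\textbf{The strengthened hypothesis is necessary.} The paper assumes only that the products are linearly independent, differentiates the identity, and then asserts that a vanishing gradient of $\mathbf{a}^T\pb_\triangle(\px)$ forces $a_j=0$ on the non-constant entries. That assertion is your constant-inclusive independence hypothesis in disguise, and it cannot be dropped. Take $\phi_1=\cos x_1$, $\phi_2=\sin x_1$, $\pQ=I$ and $\gamma_{1,1}=\gamma_{2,2}=1$. Then $\phi_1^2,\phi_2^2,\phi_1\phi_2$ are linearly independent, yet $\phi_1^2+\phi_2^2=1$ and (i) fails. This is exactly the unit-norm situation of Theorem~\ref{thm:nonexist}. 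So adopt that hypothesis from the first line, rather than opening with pairwise independence, and present it as necessary rather than as an interpretation.

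\textbf{No differentiation.} Your direct rearrangement avoids taking derivatives. The paper uses derivatives even though its basis functions are only assumed to lie in $\fspace(\uspace^d)$.

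\textbf{The factor $\gamma_{i,j}$.} The paper passes from the differentiated identity straight to $q_{i,j}=0$ and silently ignores $\gamma_{i,j}$. Your third step uses $\gamma_{i,i}=\|\psi_i\|^2>0$ and $q_{i,j}^2\le q_{i,i}q_{j,j}$, and this is what actually covers orthogonal $\psi_i,\psi_j$. That step has one implicit point: a non-constant $\phi_i$ has a non-constant $\phi_i^2$. This holds because $\phi_i$ is continuous on the connected set $\uspace^d$, and you should say so.
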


\begin{proof}
    Suppose all $\phi_i(\px) \phi_j(\px)$ are linearly independent for $1 \leq i < j \leq n$, i.e., for a vector $\mathbf{a} \in \reals^{n(n-1)/2}$,
    \begin{align} \label{eq:lin_ind_lem}
        &\mathbf{a}^T \pb_\triangle(\px) = 0 \; \forall \px \in \uspace^d  \iff \mathbf{a} = \mathbf{0}.
    \end{align}
    Following from \eqref{eq:lin_ind_lem}, the partial derivatives satisfy
    \begin{align}
        &\frac{\partial}{\partial x_i} \mathbf{a}^T \pb_\triangle(\px) = 0 \; \forall \px \in \uspace^d \; \forall i \in \{1, \ldots, n\} \label{eq:lin_ind_partials}
    \end{align}
    holds only if $a_j = 0$ for all non-constant elements of $\pb_\triangle(\px)$.
    Taking the partial derivatives of the normalization criterion in \eqref{eq:dub_sum_unique_terms} yields the same equations as in \eqref{eq:lin_ind_partials}, and thus $q_{i, j} = 0$ for all non-constant $\phi_i(\px)\phi_j(\px)$.
\end{proof}
%\begin{proof}
%    Suppose all $\phi_i(\px) \phi_j(\px)$ are linearly independent for $1 \leq i < j \leq n$, i.e., for a vector $\mathbf{a} \in \reals^{n(n-1)/2}$,
%    \begin{align} \label{eq:lin_ind}
%        &\mathbf{a}^T \pb_\triangle(\px) = 1 \; \forall \px \in \uspace^d  \iff \mathbf{a} = \mathbf{0}.
%    \end{align}
%    Assuming \eqref{eq:lin_ind}, the partial derivatives satisfy
%    \begin{align}
%        &\frac{\partial}{\partial x_i} \mathbf{a}^T \pb_\triangle(\px) = 0 \; \forall \px \in \uspace^d \; \forall i \in \{1, \ldots, n\}
%    \end{align}
%    holds only if $a_j = 0$ for all non-constant elements of $\pb_\triangle(\px)$.
%\end{proof}

Lemma \ref{lem:lin_ind} clarifies the necessity of linear dependence between products of basis functions. Intuitively, all non-constant functions appearing in \eqref{eq:dub_sum_unique_terms} must sum to zero, leaving only a constant term. More generally, the restrictions on $\pphi$ are as follows.

%Lemma \ref{lem:lin_ind} shows that 
%Let $span^2(\pb)$ be the set of all bi-linear combinations of basis functions, i.e. 
%\begin{equation}
%    span^2(\pb) = \{f \in \fspace(\uspace^d) \text{ s.t. } f = \pb(\px) \pA \pb(\px), \pA \in \reals^{n\times n} \}.
%\end{equation}

%\begin{lemma}
%    Let $\pb(\px) = [\pb_1(\px), \ldots, \pb_n(\px)]$, if every $\pb^2_{i}(\px)$ is linearly dependent with $\pb_l(\px) \pb_j(\px)$ for some $j\neq l$ for all $i \in \{1, \ldots, n\}$, then all $\pb_i(\px)$ are constant on $\uspace^d$.
%    If all for all $i\in \{1, \ldots, n\}$, 
%    \begin{equation}
%        \pb^2_{i}(\px) = 
%    \end{equation}
%    \ml{complete this.}
%\end{lemma}
%\begin{proof}
%    \ml{move proof to the appendix}
%\end{proof}

\begin{theorem} \label{thm:nonexist}
    Let $\pb(\px, \px')$ be a set of linearly independent, $\px-\px'$ separable basis functions. Then  
    $f(\px, \px') = \pb^T(\px, \px') \; \pQ \; \pb(\px, \px')$, with $\pQ \psd$, satisfies condition in Equation~\eqref{eq:norm} if and only if \edt{
    \begin{equation} \label{eq:ellipse}
        \| \pxi(\px) \| = 1
        %\| \mathbf{M} \pphi(\px) \| = 1
    \end{equation}
    where $\pxi(\px) = \big(\Gamma \circ \pQ \big)^{1/2} \pphi(\px)$ and $(\cdot)^{1/2}$ denotes a matrix square root.
    } \rmv{$f(\px, \px') = f(\px')$, i.e., $f$ is not dependent on $\px$.}
\end{theorem}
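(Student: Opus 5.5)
The plan is to reduce the normalization condition \eqref{eq:norm} to a quadratic identity in $\pphi(\px)$ using Lemma~\ref{lem:int_y}, and then read off the norm condition. Under Assumption~\ref{assumption}, the double-sum computation \eqref{eq:dub_sum_int} gives, for every $\px \in \uspace^d$,
\begin{equation*}
\int_{\uspace^d} f(\px,\px')\, d\px' = \pphi^T(\px)\big(\Gamma \circ \pQ\big)\pphi(\px),
\end{equation*}
exactly as in the proof of Lemma~\ref{lem:int_y}. That lemma also shows that $\Gamma \circ \pQ$ is PSD, by the Schur product theorem, because $\Gamma$ is a Gram matrix and $\pQ \psd$.

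Since $\Gamma \circ \pQ$ is symmetric PSD, it has a unique symmetric PSD square root $\pM = (\Gamma\circ\pQ)^{1/2}$ with $\pM^T\pM = \pM^2 = \Gamma \circ \pQ$. Any other square root $\mathbf{S}$ with $\mathbf{S}^T\mathbf{S} = \Gamma\circ\pQ$ gives the same norm, so the choice does not matter. With $\pxi(\px) = \pM\pphi(\px)$ we get
\begin{equation*}
\pphi^T(\px)\big(\Gamma\circ\pQ\big)\pphi(\px) = \pphi^T(\px)\,\pM^T\pM\,\pphi(\px) = \|\pxi(\px)\|^2 .
\end{equation*}
Hence \eqref{eq:norm} holds for all $\px$ if and only if $\|\pxi(\px)\|^2 = 1$ for all $\px$. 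Because the norm is non-negative, this is equivalent to $\|\pxi(\px)\| = 1$. Both directions of the ``if and only if'' follow at once, since each step above is an identity or an equivalence. Non-negativity \eqref{eq:nonneg} is automatic from $\pQ \psd$ and plays no role in the equivalence.

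The main point needing care is making sure the hypotheses actually used are the right ones. Linear independence of $\pb$ is not needed for the equivalence itself; it only matters for interpreting \eqref{eq:ellipse} via Lemma~\ref{lem:lin_ind}, since the map $\pxi$ must trace a subset of the unit sphere. Separability is essential, because it is what lets the $\px'$-integral factor into the Gram entries $\gamma_{i,j}$. I would also note that the inner products $\finnerprod{\psi_i}{\psi_j}$ must be finite, which I would assume or derive from the basis lying in an integrable multiplicative-algebra, so that $\Gamma$ is well defined. No further obstacle is expected; the content of the theorem is the geometric reinterpretation that $\pxi(\px)$ must lie on the unit sphere.
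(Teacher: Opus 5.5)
Your proposal is correct and matches the paper's proof essentially step for step. Like the paper, you invoke Lemma~\ref{lem:int_y} to write the $\px'$-integral as $\pphi^T(\px)(\Gamma\circ\pQ)\pphi(\px)$ with $\Gamma\circ\pQ \psd$, factor this as $\mathbf{M}^T\mathbf{M}$, and identify the quadratic form with $\|\pxi(\px)\|^2$. Your choice of the symmetric PSD square root, which guarantees $\mathbf{M}^T\mathbf{M} = \Gamma\circ\pQ$, and your remark that linear independence is never used are accurate refinements rather than departures from the paper's argument.
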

\begin{proof}
    Following from Lemma \ref{lem:int_y}, $\Gamma \circ \pQ$ is PSD, and therefore can always be decomposed using a matrix square root as $\Gamma \circ \pQ = \mathbf{M}^T \mathbf{M}$.
    The LHS of \eqref{eq:norm} can then be written as 
    \begin{align}
        \pphi^T(\px) \big(\Gamma \circ \pQ \big) \pphi(\px)  
        = &\Big(\pphi^T(\px) \mathbf{M}^T \Big) \Big( \mathbf{M} \pphi(\px) \Big) \notag \\
        = &\|\pxi(\px) \|^2 
    \end{align}
    Constraint \eqref{eq:norm} is therefore equivalent to \eqref{eq:ellipse}.
\end{proof}

%\begin{proof}
%    Recall, from the proof of Lemma \ref{lem:fixed_point}, Eqaution~\eqref{eq:norm} can be expressed as 
%    \begin{equation} \label{eq:pf1}
%        \pphi^T(\px) \big(\Gamma \circ \pQ \big) \pphi(\px) = 1 \qquad \forall \px \in \uspace^d, 
%    \end{equation}
%    where $\Gamma \circ \pQ = \bar{\pQ} \psd$. 
%    %Suppose that $\phi_i(\px) \phi_j(\px)$ is linearly independent from $\phi_k(\px) \phi_l(\px)$ for all $(i, j) \neq (k, l)$. 
%    Then, \eqref{eq:pf1} holds iff i) there exists a constant basis function $\phi_c(\px) = c$, $c \in \reals$ with corresponding coefficient $q_c$ such that $q_c = c^{-1}$, and ii) $\bar{q}_{i,j} = 0$ for all $\langle \psi_i, \psi_j \rangle \neq 0$. For any (non-degenerate) $\psi_i$, $\finnerprod{\psi_i}{\psi_i} = \| \psi_i \|^2 \geq 0$. Additionally, if $q_{i,i} = 0$ then $q_{i, j} = q_{j, i}= 0$ $\forall j \in \{1, \ldots, n\}$, removing all terms involving $\phi_i$ from $f$, effectively removing $\phi_i(\px) \psi_i(\px')$ from the basis set. Thus, $\bar{q}_{i, i} \geq 0$.
%\end{proof}

%The proof of Theorem \ref{thm:nonexist} is provided in the Appendix.

Theorem \ref{thm:nonexist} illuminates a critically limiting property of \sos{} forms for conditional density estimation. 
Specifically, $\pphi(\px)$ is restricted to functions that parameterize an ellipse manifold. 
Such a $\pphi$ can be constructed by normalizing the output, i.e.,
\begin{equation} \label{eq:unit_vector}
    \pxi(\px) = \frac{\tilde{\pxi}(\px)}{\| \tilde{\pxi}(\px) \|}
\end{equation}
where $\tilde{\pxi}$ is some arbitrary basis. However, constructing $\pphi$ via \eqref{eq:unit_vector} generally necessitates non-constant functions in the denominator, and therefore forfeits analytical integrability, i.e. $\pxi \not \in \mathcal{A}$ even if $\tilde{\pxi} \in \mathcal{A}$. There exist special choices of $\pphi$ that automatically satisfy \eqref{eq:ellipse} without normalization, namely, when $\pxi(\px) \circ \pxi(\px)$ is a partition of unity (e.g., when $\pxi(\px)$ is a Bernstein polynomial basis as in \cite{amorese2026universal}). However, such special structures are often rigid and do not allow for sparse representations and/or optimization of the parameters of the basis functions.
%To illustrate this limitation, consider two means of constructing such a $\pphi$: (i) normalization and (ii) bounded partition-of-unity.

%Enforcing \eqref{eq:ellipse} while also ensuring $\pphi(\px) \in \mathcal{A}$ is very challenging, generally requiring a known partition-of-unity structure or a bounding relaxation procedure.

To overcome this critical restriction, we present a rational \sos{} form for conditional density estimation that employs the flexible and expressive \sos{} form, while bypassing the aforementioned restrictions of the normalization constraint.

\section{RATIONAL FACTOR FORM}
We propose the following \textit{rational-factor} (RF) form for modeling each piece of the Markov Process by introducing a factor function $g(\cdot)$:
\begin{align}
    p(\px' | \px) &= \frac{g(\px') \tf(\px, \px')}{g(\px)} \label{eq:rff_cond} \\
    p(\px_0) &= g(\px_0) h_0(\px_0) \label{eq:rff_init}
\end{align}
where $\tf$, $g$, and $h_0$ are all \sos{} form functionals. By ensuring $g(\cdot) > 0$, equations~\eqref{eq:rff_cond} and \eqref{eq:rff_init} are non-negative and well defined. Let $g(\cdot) = \pphi^T_g(\cdot) \pR \pphi_g(\cdot)$. To guarantee that $g$ is strictly positive, it is sufficient to ensure that $\pR \posdef$ and $\phi_g(\cdot) \neq 0$ on $\uspace^d$ for at least one $\phi_g(\cdot) \in \pphi_g(\cdot)$. 

Crucially, the RF form maintains the analytical feasibility of belief propagation. This can be seen by using \eqref{eq:marg} to compute $p(\px_1)$ assuming the RF form:
\begin{align}
    p(\px_1) &= \int_{\px_0} p(\px_1 | \px_0) p(\px_0) d\px_0 \notag \\
    &= \int_{\px_0} \frac{g(\px_1) \tf(\px_{0}, \px_1)}{g(\px_{0})} \big(g(\px_0) h_0(\px_0) \big) d\px_0 \notag \\
    &= g(\px_1) \int_{\px_0} \tf(\px_{0}, \px_1) h_0(\px_0) d\px_0 \label{eq:rff_prop} \\
    &= g(\px_1) h_1(\px_1). \label{eq:rff_h1}
\end{align}
with $h_1(\px_1) := \int_{\px_0} \tf(\px_{0}, \px_1) h_0(\px_0) d\px_0$.
Since $\tf(\px_0, \px_1) \in \mathcal{A}$ and $h_0(\px_0) \in \mathcal{A}$, $\tf(\px_0, \px_1) h_0(\px_0) \in \mathcal{A}$ and thus \edt{the} integral in \eqref{eq:rff_prop} is analytically feasible. While $h_1$ may not necessarily be of \sos{} form, it is guaranteed to be in $\mathcal{A}$. Thus, one can recursively repeat the above procedure to compute $p(\px_k)$ for any $k > 0$.

\subsection{Formulating the Conditional Normalization Constraint} \label{sec:form_norm}
The RF form naturally permits non-negativity, and analytical belief propagation, leaving only the formulation of the normalization constraint.
Intuitively, $g(\px)$ is responsible for normalizing $\tf$ at each conditioner $\px_{k-1}$. As a consequence, the expressions of $\tf$ and $h_0$ must adjust their density expressions to compensate for the factor $g(\px')$. Assuming RF form, the constraint in Equation~\eqref{eq:norm} can be rewritten as
\begin{align}
    \int_{\px'} \frac{g(\px') \tf(\px, \px')}{g(\px)} d\px' &= 1 \notag \\
    \implies \int_{\px'} g(\px') \tf(\px, \px') d\px' &= g(\px).
\end{align}
Using \sos{} forms,
\begin{align}
    %\int_{\px_{k}} \Big(\phi^T_g(\px_k) \pR \phi_g(\px_k)\Big)\Big( \phi^T_f(\px_{k-1}) \circ \psi^T_f(\px_k) Q \phi_f(\px_{k-1}) \circ \psi_f(\px_k) \Big)  %\tf(\px_{k-1}, \px_k) d\px_{k} &= g(\px_{k-1}).
    \int_{\px'} \Big(\pphi^T_g(\px') \pR \pphi_g(\px')\Big)\Big( \pb_{\tf}(\px,& \px')^T \pQ \pb_{\tf}(\px, \px') \Big) d\px' \notag \\
    &= \pphi^T_g(\px) \pR \pphi_g(\px) \label{eq:rff_sos}
\end{align}
where $\pb_{\tf}(\px, \px') = \pphi_{\tf}(\px) \circ \ppsi_{\tf}(\px')$. Integrating over $\px'$ in the LHS of \eqref{eq:rff_sos} yields a linear combination of products of $\pphi_{\tf}(\px)$, whereas the RHS is a linear combination of products of $\pphi_g(\px)$. Thus, equality (for non-trivial cases) requires $g$ and $\tf$ to span the same function space. This can be enforced simply by letting $g$ and $\tf$ share the same $\px$-basis functions, i.e., $\pphi_{\tf}(\px) = \pphi_g(\px)$, which we simply denote as $\pphi(\px)$.
Normalization can then be formulated via equality between coefficients for each respective product function $\phi_i(\px)\phi_j(\px)$. 

For ease of presentation, we denote $\pv^\pR, \pv^\pQ \in \reals^{n^2}$ as the vectorization of the elements of $\pR$ and $\pQ$ respectively. Each element $v^\pR_{(i,j)}$ of $\pv^\pR$ corresponds to a product of basis functions $\phi_i(\px)\phi_j(\px)$. Then, the normalization constraint can be expressed by a simple linear relationship.

\begin{lemma} \label{lem:fixed_point}
    The following linear relationship is sufficient for satisfying the normalization constraint \eqref{eq:rff_sos}
    \begin{equation} \label{eq:fixed_point}
        \pv^\pR = \diag(\pv^\pQ) \pE \pv^\pR,
    \end{equation}
    where $\pv^\pR$ and $\pv^\pQ$ are vectors containing the elements in $\pR$ and $\pQ$ respectively and $\pE \in \reals^{n^2 \times n^2}$ is a matrix that depends only on $\pphi(\cdot)$ and $\ppsi_{\tf}(\cdot)$ with elements
    \begin{equation} \label{eq:E}
        e_{(i,j),(k,l)} = \finnerprod{\phi_i(\px') \phi_j(\px')}{\psi_k(\px') \psi_l(\px')}.
    \end{equation}
\end{lemma}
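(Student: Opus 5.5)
The plan is to expand both sides of \eqref{eq:rff_sos} in the shared $\px$-basis $\pphi$ and to match coefficients of the product functions $\phi_k(\px)\phi_l(\px)$ one by one. Only sufficiency is claimed, so no linear independence of these products is needed: if every coefficient agrees, the two functions of $\px$ agree identically on $\uspace^d$.

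First I will write $g(\px') = \sum_{i,j} r_{i,j}\phi_i(\px')\phi_j(\px')$. Using Assumption~\ref{assumption} with $\pphi_{\tf}=\pphi$, I will write $\tf(\px,\px') = \sum_{k,l} q_{k,l}\phi_k(\px)\phi_l(\px)\psi_k(\px')\psi_l(\px')$. Their product is a finite double sum, so integration over $\px'$ can be exchanged with summation, exactly as in the computation leading to \eqref{eq:dub_sum_int}. This gives
\begin{equation*}
\int_{\uspace^d} g(\px')\tf(\px,\px')\,d\px' = \sum_{k,l} \phi_k(\px)\phi_l(\px)\, q_{k,l} \sum_{i,j} r_{i,j}\,\finnerprod{\phi_i\phi_j}{\psi_k\psi_l}.
\end{equation*}
Every inner product here is finite and available in closed form, because all factors lie in the integrable multiplicative-algebra $\mathcal{A}$ on the bounded domain.

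Next I will recognise the inner sum as an entry of a matrix--vector product with $\pE$ from \eqref{eq:E}. The coefficient of $\phi_k\phi_l$ on the left-hand side is then $v^\pQ_{(k,l)}\,[\pE\pv^\pR]_{(k,l)}$. On the right-hand side of \eqref{eq:rff_sos}, the coefficient of $\phi_k\phi_l$ is simply $v^\pR_{(k,l)}$. Stacking these equalities over all pairs $(k,l)$ gives exactly $\pv^\pR = \diag(\pv^\pQ)\pE\pv^\pR$.

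Conversely, suppose \eqref{eq:fixed_point} holds. Multiply the $(k,l)$-th scalar equation by $\phi_k(\px)\phi_l(\px)$ and sum over all $(k,l)$. This reproduces the left- and right-hand sides of \eqref{eq:rff_sos} for every $\px$, so the normalization constraint is satisfied. This is the sufficiency direction stated in Lemma~\ref{lem:fixed_point}.

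The main obstacle is careful index bookkeeping. The pair $(i,j)$ that contracts with $\pv^\pR$ belongs to the $\phi$-products, while the pair $(k,l)$ that survives belongs to the $\psi$-products and to $\pQ$. Written literally with \eqref{eq:E}, the contraction is therefore over the first index of $\pE$, i.e.\ $[\pE^T\pv^\pR]_{(k,l)}$. So I will fix the ordering convention for $\pE$'s rows and columns, with the row indexed by the $\psi$-pair that multiplies $q_{k,l}$, so that the product is written as $\pE\pv^\pR$ as in the statement. This is purely a matter of how $\pE$ is arranged and does not affect the argument.

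I will also note that the symmetry of $\pR$ and $\pQ$ makes the ordered-pair vectorization consistent: the terms $(k,l)$ and $(l,k)$ carry identical equations, so counting both is harmless.
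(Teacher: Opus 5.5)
Your proposal is correct and is essentially the paper's proof: expand both sides of \eqref{eq:rff_sos} in the shared $\pphi$-basis, exchange the finite sums with the integral over $\px'$, and observe that matching the coefficient of each $\phi_i(\px)\phi_j(\px)$ is sufficient. The transpose you flagged is genuine: read literally, \eqref{eq:E} gives the contraction $\pE^T\pv^\pR$, and $\pE$ is not symmetric in general. The paper's appendix resolves it exactly as you do, placing $\int \phi_k\phi_l\psi_i\psi_j\,d\px'$ at position $((i,j),(k,l))$ so that rows are indexed by the $\psi$-pair multiplying $q_{i,j}$.
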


The proof is provided in the Appendix.
Lemma \ref{lem:fixed_point} illuminates the circularity of using the same function $g(\cdot)$ in the denominator (with argument $\px$) and as a factor in the numerator (with argument $\px'$). With $\pQ \psd$, ensuring the elements of $\pR$ satisfy the condition in Equation~\eqref{eq:fixed_point} and $\pR \posdef$ yields a valid CDE. 

Once a valid RF form CDE is obtained, and thus $g(\cdot)$ is known, learning the initial belief is straightforward. Since $g(\cdot) > 0$, density estimation of $p(\px_0)$ can be equivalently recast as using $h_0(\px_0)$ to approximate $p(\px_0)(g(\px_0))^{-1}$. Since $g(\px_0)h_0(\px_0) \in \mathcal{A}$ and is analytically integrable, the normalization constant can be obtained exactly. 

\begin{remark}
    The RF form is not restricted to \sos{} functionals. By substituting non-negative linear combinations of non-negative basis functions for $\tf$, $g$, and $h_0$, the estimators can inherit the same properties as \sos{} forms. However, doing so greatly restricts the choice of $\mathcal{A}$, as many universal approximators (e.g., monomial-basis polynomials) lose their universality when restricted to non-negative coefficients.
\end{remark}

%We have shown that the RF form easily permits non-negativity and analytical belief propagation, hence, guaranteeing the conditional normalization constraint remains.

\subsection{Time and Memory Complexity}
Let $n_\theta$ be the number of parameters of a basis function in $\pphi$ or $\ppsi$. Then the model can be stored with $\mathcal{O}(n^2 + n_\theta)$ parameters. Belief propagation has time complexity $\mathcal{O}(n_\theta n^4)$ dominated by the product of two \sos{} forms in Equations~\eqref{eq:rff_prop} and \eqref{eq:E}. While $n_\theta$ is often implicitly dependent on $d$, for many typical basis functions (e.g., monomials), this dependence is linear. 
%With a direct parameterization iteration of has time complexity of $\mathcal{O}(n_\theta n^6)$ dominated by the eigen-decomposition of $\pE$ expansion of the 

%\subsection{Extension to full Bayesian Networks}
\begin{remark}
    The RF form can be extended to model Bayesian Networks (and time-inhomogenous Markov processes), e.g., $p(\pz, \py, \px)$, with non-Markovian dependency, e.g.
    \begin{equation}
        p(\pz, \py, \px) = p(\pz | \py, \px) p(\py | \px) p(\px).
    \end{equation}
    In fact, since, for such models, the conditional densities are not the same and do not require the same recursion as a time-homogeneous Markov process. Thus, the factor functions $g$ in the numerator and denominator need not be the same, making the normalization constraint in Equation~\eqref{eq:fixed_point} a simple matrix equality (rather than an eigenvalue problem).
\end{remark}

\section{CONSTRAINED TRAINING}
In this section, we discuss the necessary pieces for formulating a tractable constrained optimization problem for training the Markov Process. 
Recall, for $p(\px' | \px)$ to be a valid CDE, both $\pQ$ and $\pR$ must belong to the PSD and PD cones respectively, and are subject to the equality constraints in Equation~\eqref{eq:fixed_point}. Fortunately, the constraints mirror those in a SDP \citep{vandenberghe1996semidefinite}. Therefore, we can leverage SDP techniques to enforce the constraints during training.

The learning problem, however, cannot be formulated as an SDP and is non-convex for two reasons. Firstly, common density estimation objectives, such as Expected Log Likelihood (LLH), do not admit a linear objective function required by SDP. Secondly, the proposed functional form permits \textit{optimization of the basis function parameters} themselves. This allows the optimizer to reduce redundancy between basis functions, and thereby achieving a sparse representation, at the cost of making the problem potentially highly non-convex depending on the choice of basis functions. Due to the non-convexity and the potentially large supply of data, we use stochastic gradient-descent (SGD) methods to train both $p(\px' | \px)$ and $p(\px_0)$.

\subsection{Enforcing the Conditional Normalization Constraint}
%The CDE is valid if $\pQ \psd$, $\pR \posdef$, and \eqref{eq:fixed_point} is satisfied.
%Firstly, enforcing $\pQ \psd$ can be done via a direct parmeterization of an unconstrained matrix $\pL$ such that $\pQ = \pL^T \pL$.
Let $\pQ = \pL^T \pL$ for some unconstrained parameter matrix $\pL$, automatically making $\pQ \psd$.
Since the parameters of the basis functions are subject to change during training, the equality constraint in Equation~\eqref{eq:fixed_point} may change as well, since $\pE$ is dependent on the basis functions. Furthermore, equality constraints in SDP are notoriously challenging, requiring specialized techniques, e.g. augmented Lagrangians \citep{burer2003nonlinear}. In practice, if the technique has not properly converged, there may be non-negligible numerical errors in the normalization of the CDE. Consequently, when predicting beliefs in the future (using Equation~\eqref{eq:rff_prop}), the numerical errors can accumulate leading to substantial errors in the predictions.

To mitigate this problem, it is highly preferable to avoid iterative equality constraint techniques and rather use a direct parameterization of $\pR$ such that \eqref{eq:fixed_point} holds exactly (up to floating-point precision). 
Rearranging Equation~\eqref{eq:fixed_point} as
\begin{equation} \label{eq:eigenvalue}
    \big(\diag(\pv^\pQ) \pE - I) \pv^\pR = \mathbf{0}
\end{equation}
shows that a feasible $\pv^\pR$ is the eigenvector of the matrix $\pM = \diag(\pv^\pQ) \pE$ corresponding to eigenvalue $\lambda^\pM = 1$. 

Obtaining a feasible $\pv^\pR$ can be achieved in two steps: (i) scaling $\pv^\pQ$ such that $\pM$ has at least one real eigenvalue $\lambda^\pM = 1$, and (ii) computing the corresponding eigenvector. For (i), we can simply scale $\pQ$ by $(\lambda^\pM_{>0})^{-1}$ where $\lambda^\pM_{>0}$ is any positive real eigenvalue of $\pM$. If $\pM$ does not have at least one positive real eigenvalue, a feasible $\pR$ can not easily be obtained without manipulating the basis functions themselves, and thus a loss penalty can be applied. However, we found that this rarely happens in practice. By rescaling $\pQ$ by $(\lambda^\pM_{>0})^{-1}$, $\pM$ is also effectively scaled by the same quantity, thereby guaranteeing an eigenvalue of $1$. The corresponding feasible eigen vector $\pv^\pR$ can then be found via an eigen decomposition.

To enforce that $\pR \posdef$, we can use a log-determinant barrier, as is common in interior-point SDP solvers. Specifically, $\log \det(\pR) \rightarrow \infty$ as any eigenvalues approach $0$, i.e., the semi-definite boundary.
SGD is unpredictable and may exit the feasible region during training, thus a loss penalty on the non-positive eigenvalues can be applied to guide the parameters into the PSD cone, captured by the following loss term
\begin{align} \label{eq:ldb}
    &\mathcal{L}_{\pR} =  \notag\\
    &\begin{cases}
        \log \det(\pR) & \text{ if all } \lambda^\pR \geq 0\\
        \sum_{i=1}^n \Big(\max(-\text{Re}(\lambda^\pR_i) 
        + \epsilon, 0)  \\ \notag \qquad + \text{Im}(\lambda + \epsilon)\Big)^2 & \text{otherwise}
    \end{cases}
\end{align}
for some $\epsilon > 0$.

The cumulative loss function for training the RF form CDE is
\begin{equation}
    \mathcal{L} = -\expect_{\px', \px}\big[\log p(\px' | \px)\big] - \mathcal{L}_{\pR},
\end{equation}
where the first term is the empirical negative log-likelihood of the model. Additional regularization loss can be added depending on the chosen family of basis functions.

\section{EXPERIMENTS}
The aforementioned sections layout the theoretical formulations of the RF form Markov model. However, the utility of the proposed model is contingent on its ability to learn realistic high-dimensional systems. This section validates the theoretical prowess of the RF form Markov model, achieving exact (with respect to the learned model) belief propagation for systems of up to 12 dimensions.
Details of the experiment setups and system dynamics along with more elaborate discussions on the results are provided in the Appendix.

\subsection{Choice of basis functions}
The experiments were performed using component-wise products of 1D Beta-pdf $\pphi$ (and similarly $\ppsi$) basis functions of the form
\begin{equation} \label{eq:beta_pdf}
    \phi_i(\px) = \prod_{m=1}^d x_m^{1-\alpha_{i, m}} (1-x_m)^{1-\beta_{i, m}} 
\end{equation}
where $\alpha_{i, m}$ and $\beta_{i, m}$ are trained parameters. 
This family of basis functions was chosen for direct comparison with Bernstein Normalzing Flow (BNF), since Beta-PDFs are closely related to the Bernstein basis polynomials.
%This family of basis functions was chosen for the smooth ``bell-shape'', inspired by the efficacy of Beta-mixture distributions for modeling distributions over $\uspace^d$ \citep{rousseau2010rates}.

\begin{figure}[h]
    \centering
    \includegraphics[width=0.98\linewidth, trim=00 00 45 45, clip]{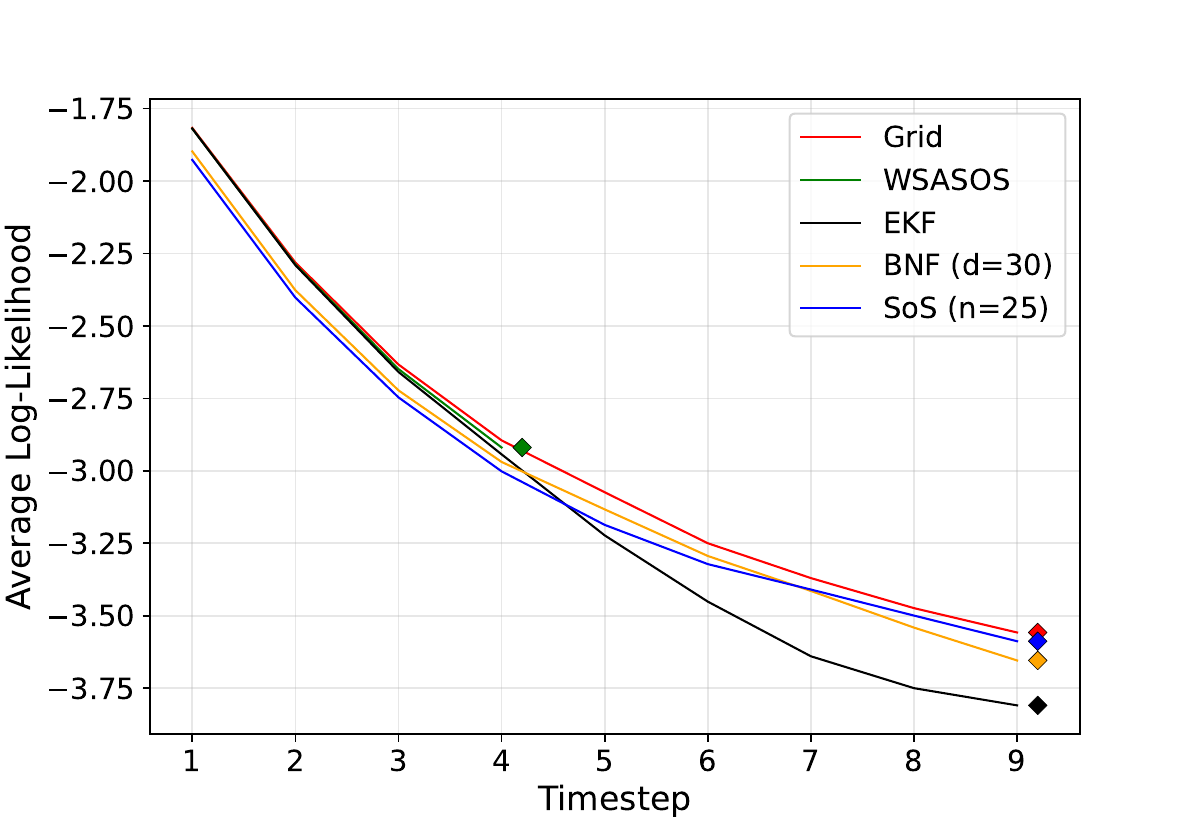}
    \caption{Accuracy Comparison on 2D system against state-of-the-art density propagation methods}
    \label{fig:comparison}
\end{figure}
\begin{figure*}[ht]
    \centering
    \begin{minipage}[c]{0.05\textwidth}
        \centering
        \rotatebox{90}{$p(x, z)$}
    \end{minipage}%
    % position row
    \begin{minipage}[c]{0.95\textwidth}
    %% pdf pics
    \begin{subfigure}[b]{0.130\linewidth}
        \centering
        \includegraphics[width=\linewidth]{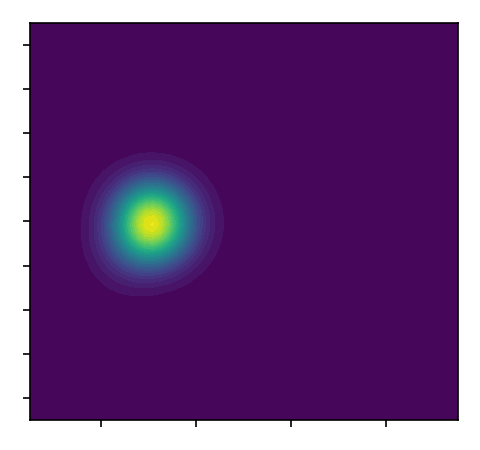}
    \end{subfigure}%
    \begin{subfigure}[b]{0.130\linewidth}
        \centering
        \includegraphics[width=\linewidth]{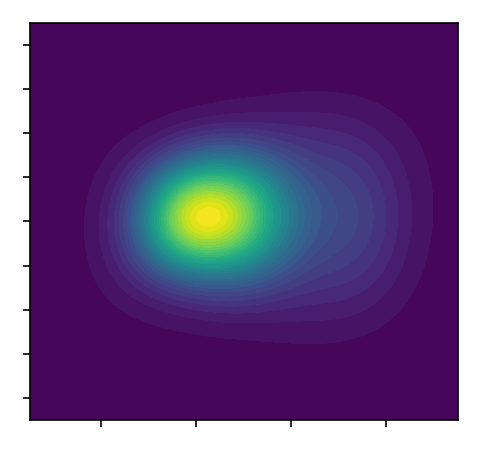}
    \end{subfigure}%
    \begin{subfigure}[b]{0.130\linewidth}
        \centering
        \includegraphics[width=\linewidth]{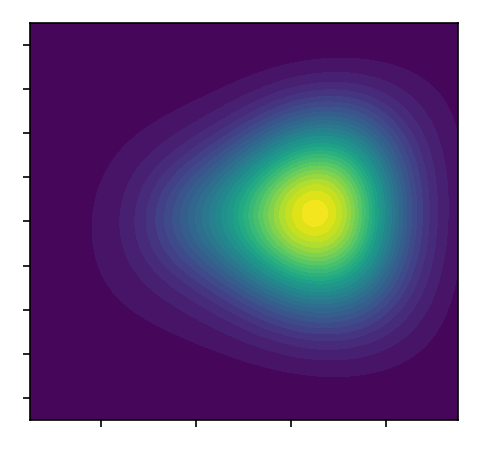}
    \end{subfigure}%
    \begin{subfigure}[b]{0.130\linewidth}
        \centering
        \includegraphics[width=\linewidth]{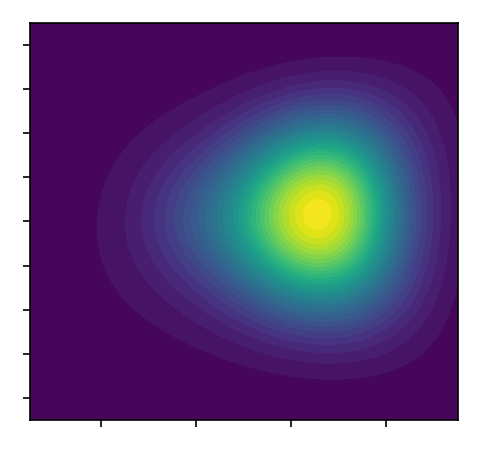}
    \end{subfigure}%
    %% mc pics
    \begin{subfigure}[b]{0.121\linewidth}
        \centering
        \includegraphics[width=\linewidth]{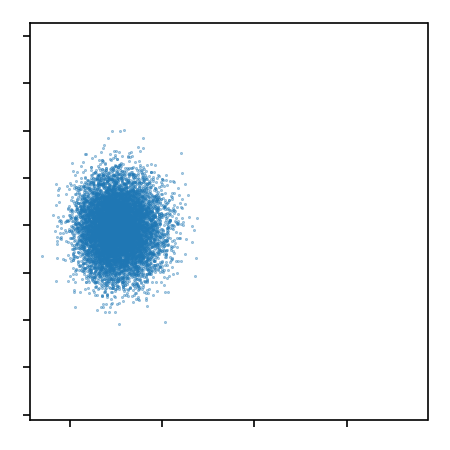}
    \end{subfigure}%
    \begin{subfigure}[b]{0.121\linewidth}
        \centering
        \includegraphics[width=\linewidth]{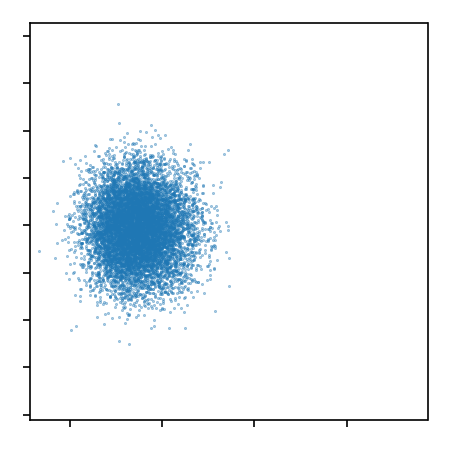}
    \end{subfigure}%
    \begin{subfigure}[b]{0.121\linewidth}
        \centering
        \includegraphics[width=\linewidth]{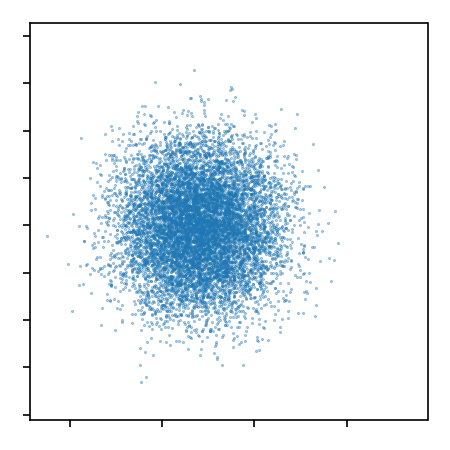}
    \end{subfigure}%
    \begin{subfigure}[b]{0.121\linewidth}
        \centering
        \includegraphics[width=\linewidth]{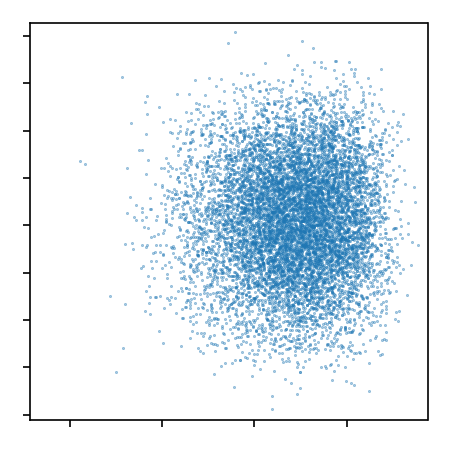}
    \end{subfigure}%
    \end{minipage}
    \centering
    \\
    % angular position row
    \begin{minipage}[c]{0.05\textwidth}
        \centering
        \rotatebox{90}{$p(\theta, v_x)$}
    \end{minipage}%
    %% pdf pics
    \begin{minipage}[c]{0.95\textwidth}
    \begin{subfigure}[b]{0.130\linewidth}
        \centering
        \includegraphics[width=\linewidth]{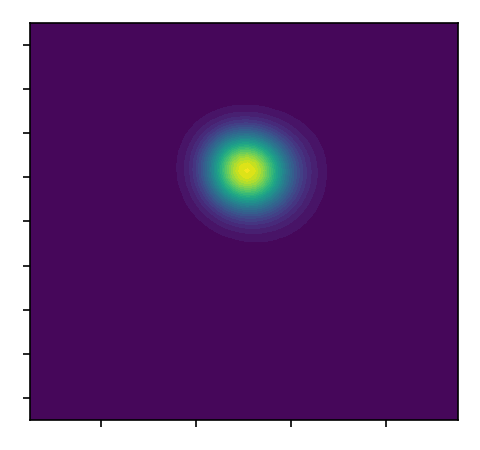}
    \end{subfigure}%
    \begin{subfigure}[b]{0.130\linewidth}
        \centering
        \includegraphics[width=\linewidth]{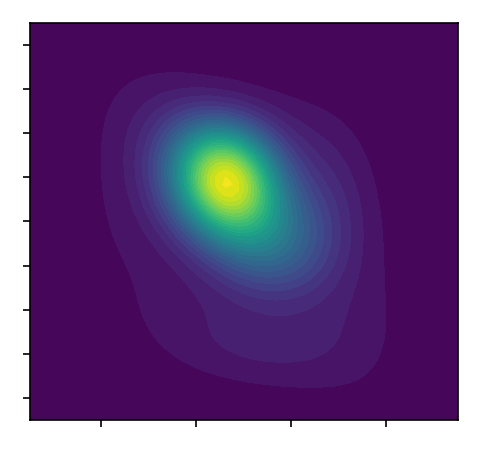}
    \end{subfigure}%
    \begin{subfigure}[b]{0.130\linewidth}
        \centering
        \includegraphics[width=\linewidth]{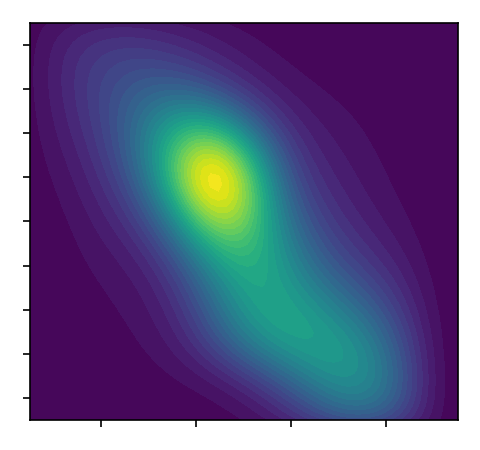}
    \end{subfigure}%
    \begin{subfigure}[b]{0.130\linewidth}
        \centering
        \includegraphics[width=\linewidth]{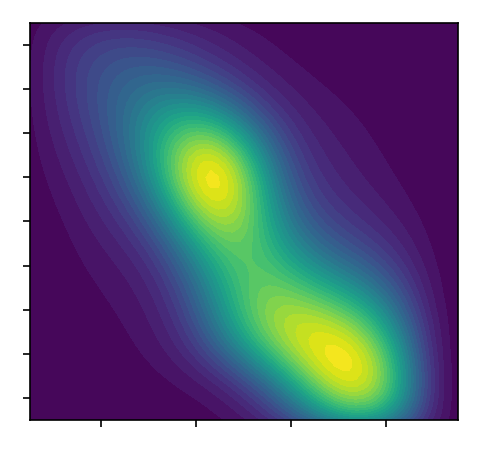}
    \end{subfigure}%
    %% mc pics
    \begin{subfigure}[b]{0.121\linewidth}
        \centering
        \includegraphics[width=\linewidth]{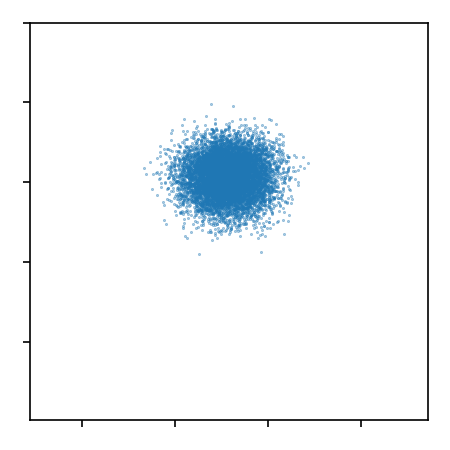}
    \end{subfigure}%
    \begin{subfigure}[b]{0.121\linewidth}
        \centering
        \includegraphics[width=\linewidth]{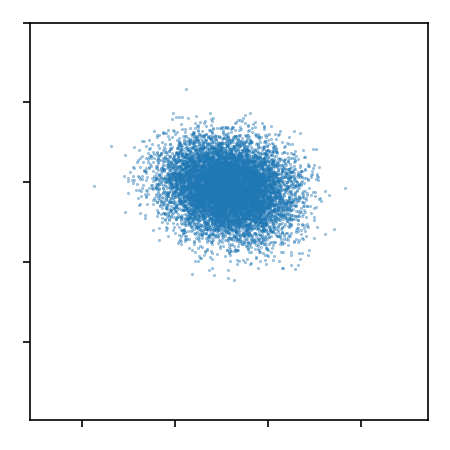}
    \end{subfigure}%
    \begin{subfigure}[b]{0.121\linewidth}
        \centering
        \includegraphics[width=\linewidth]{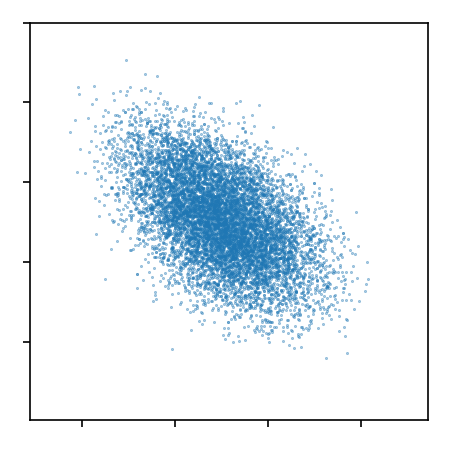}
    \end{subfigure}%
    \begin{subfigure}[b]{0.121\linewidth}
        \centering
        \includegraphics[width=\linewidth]{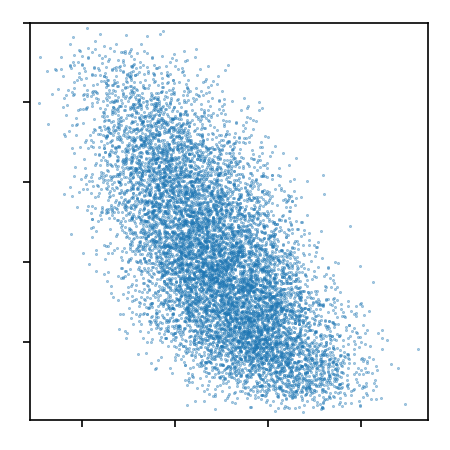}
    \end{subfigure}%
    \end{minipage}
    % velocity row
    \begin{minipage}[c]{0.05\textwidth}
        \centering
        \rotatebox{90}{$p(v_z, \omega)$}
    \end{minipage}%
    \begin{minipage}[c]{0.95\textwidth}
    %% pdf pics
    \begin{subfigure}[b]{0.130\linewidth}
        \centering
        \includegraphics[width=\linewidth]{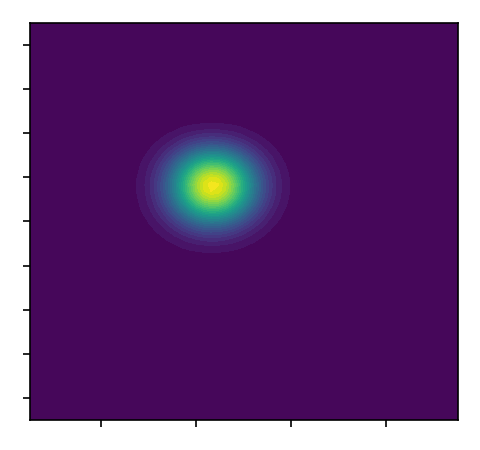}
        \caption*{k = 0}
    \end{subfigure}%
    \begin{subfigure}[b]{0.130\linewidth}
        \centering
        \includegraphics[width=\linewidth]{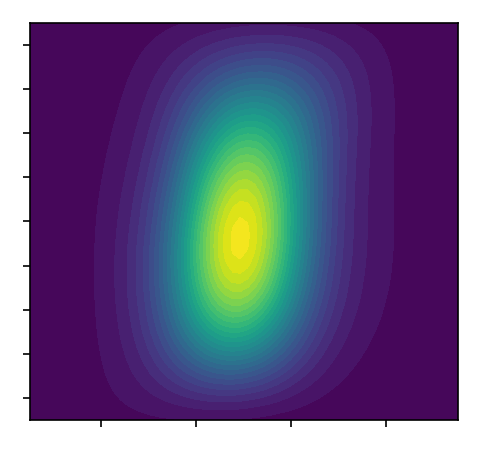}
        \caption*{k = 1}
    \end{subfigure}%
    \begin{subfigure}[b]{0.130\linewidth}
        \centering
        \includegraphics[width=\linewidth]{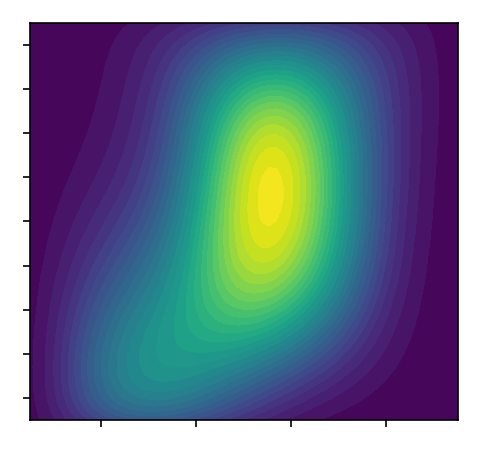}
        \caption*{k = 4}
    \end{subfigure}%
    \begin{subfigure}[b]{0.130\linewidth}
        \centering
        \includegraphics[width=\linewidth]{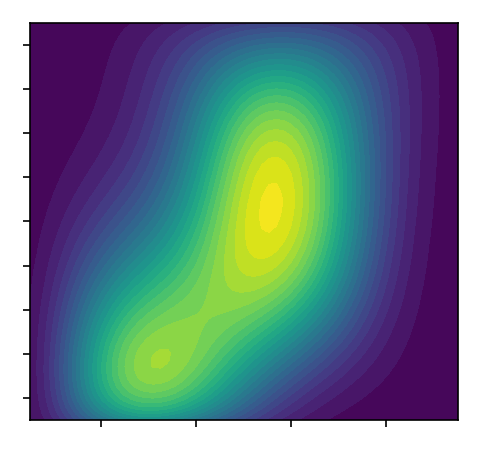}
        \caption*{k = 9}
    \end{subfigure}%
    %% mc pics
    \begin{subfigure}[b]{0.121\linewidth}
        \centering
        \includegraphics[width=\linewidth]{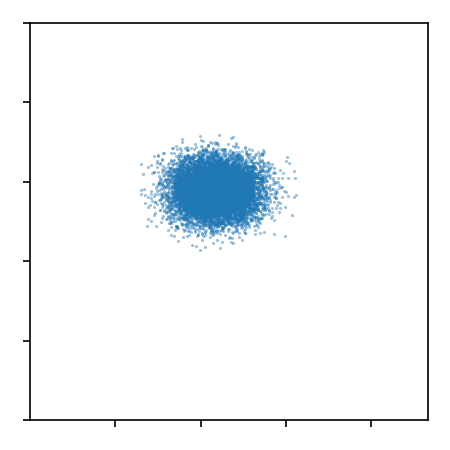}
        \caption*{k = 0}
    \end{subfigure}%
    \begin{subfigure}[b]{0.121\linewidth}
        \centering
        \includegraphics[width=\linewidth]{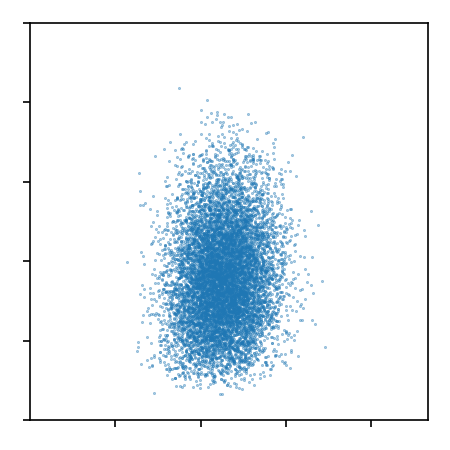}
        \caption*{k = 1}
    \end{subfigure}%
    \begin{subfigure}[b]{0.121\linewidth}
        \centering
        \includegraphics[width=\linewidth]{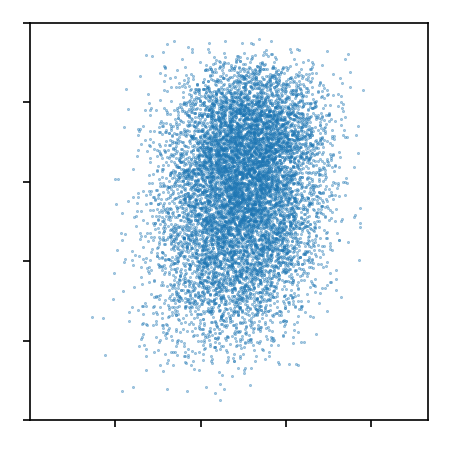}
        \caption*{k = 4}
    \end{subfigure}%
    \begin{subfigure}[b]{0.121\linewidth}
        \centering
        \includegraphics[width=\linewidth]{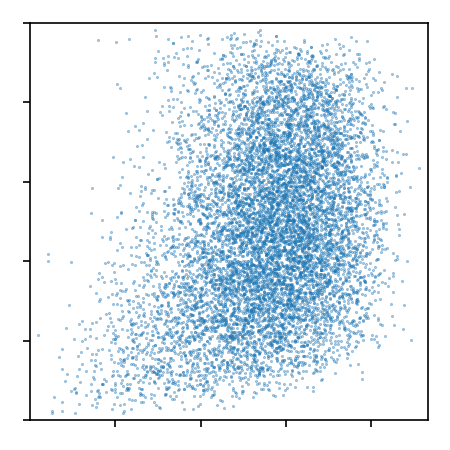}
        \caption*{k = 9}
    \end{subfigure}%
    \end{minipage}
    \caption{Visual Comparison for 6D Quadcopter system. 
    (Left) Learned model. (Right) Monte Carlo simulation (ground truth).
    }
    \label{fig:6d}
\end{figure*}

\subsection{2D Comparison}
To ensure that the \sos{} form does not lose fine-grain accuracy in lower-dimensional problems, we performed an empirical comparison between \sos{}, BNF~\citep{amorese2026universal}, and approximation methods. Specifically we compared against a Gaussian Process regression model using: traditional Extended Kalman Filter (EKF) \citep{schei1997finite} propagation, a grid-based Gaussian Mixture Model (GMM) method \citep{figueiredo2024uncertainty}, and a component-splitting GMM method \citep{kulik2024nonlinearity} (WSASOS). 
Each method was tested on data generated from a 2D Van der Pol system with additive Gaussian noise and evaluated according to the average log-likelihood over Monte Carlo samples generated from the true system. Each experiment was performed 10 times and all log-likelihood were found to have a variance across trials below $10^{-3}$.

The results are shown in Figure~\ref{fig:comparison}.
The grid-based method performs the best, since the underlying system has additive Gaussian noise. This allows the Gaussian Process regression model to learn the true system nearly perfectly. Even with a near-perfect model, EKF looses significant prediction accuracy. The WSASOS method runs out of memory after $k=4$ due to the exponential blow up in the number of components.
\sos{} and BNF perform very comparably with \sos{} having a slight advantage over BNF in the final time steps.

However, the degree $30$ BNF model has over 400k parameters, whereas the $n=25$ \sos{} model has only 1450 parameters. In addition to BNF's inability to scale beyond 2D, the exponential blow-up in parameters evidences strong diminishing returns when using dense Bernstein polynomials. The \sos{} model, however, can achieve the same performance, with orders of magnitude fewer parameters. Belief propagation using \sos{} is thereby also orders of magnitude faster. 

\subsection{6D Planar Quadcopter}

\begin{figure*}[t]
    \centering
    \begin{minipage}[c]{0.05\textwidth}
        \centering
        \rotatebox{90}{$p(x$, $y$)}
    \end{minipage}%
    % position row
    \begin{minipage}[c]{0.95\textwidth}
    %% pdf pics
    \begin{subfigure}[b]{0.130\linewidth}
        \centering
        \includegraphics[width=\linewidth]{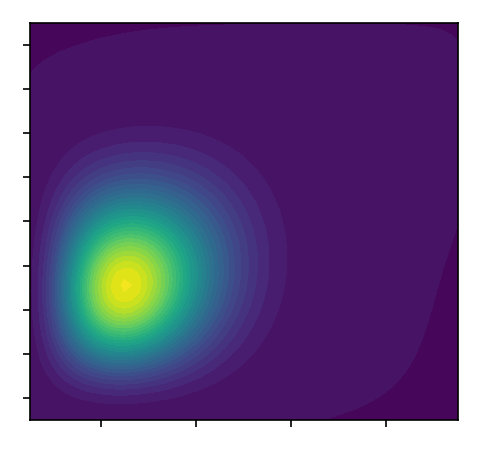}
    \end{subfigure}%
    \begin{subfigure}[b]{0.130\linewidth}
        \centering
        \includegraphics[width=\linewidth]{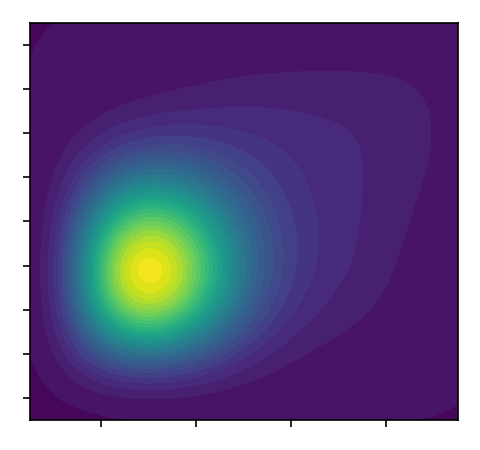}
    \end{subfigure}%
    \begin{subfigure}[b]{0.130\linewidth}
        \centering
        \includegraphics[width=\linewidth]{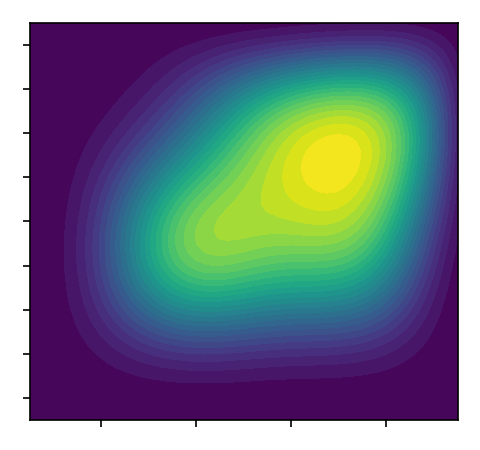}
    \end{subfigure}%
    \begin{subfigure}[b]{0.130\linewidth}
        \centering
        \includegraphics[width=\linewidth]{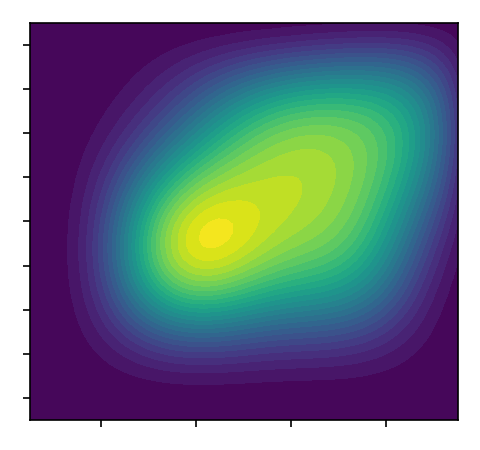}
    \end{subfigure}%
    %% mc pics
    \begin{subfigure}[b]{0.121\linewidth}
        \centering
        \includegraphics[width=\linewidth]{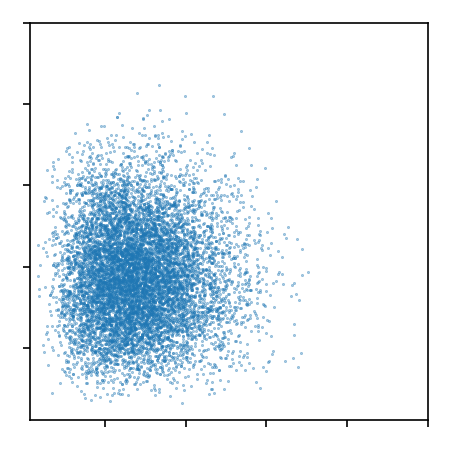}
    \end{subfigure}%
    \begin{subfigure}[b]{0.121\linewidth}
        \centering
        \includegraphics[width=\linewidth]{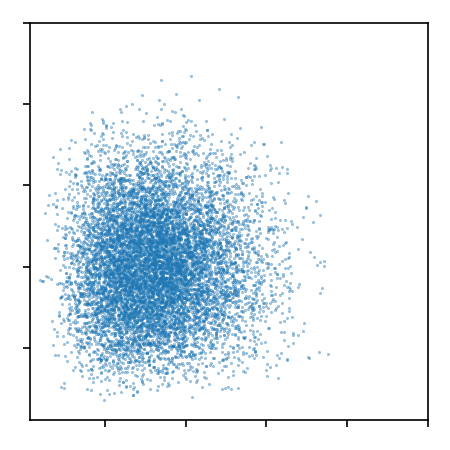}
    \end{subfigure}%
    \begin{subfigure}[b]{0.121\linewidth}
        \centering
        \includegraphics[width=\linewidth]{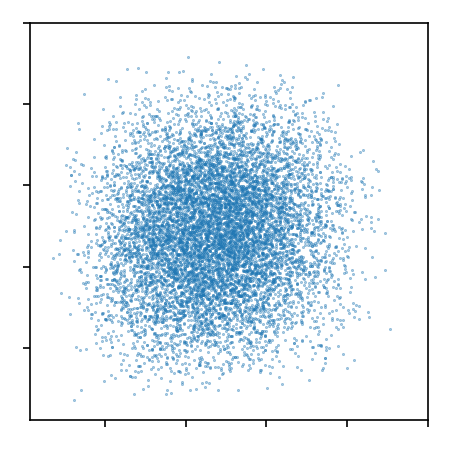}
    \end{subfigure}%
    \begin{subfigure}[b]{0.121\linewidth}
        \centering
        \includegraphics[width=\linewidth]{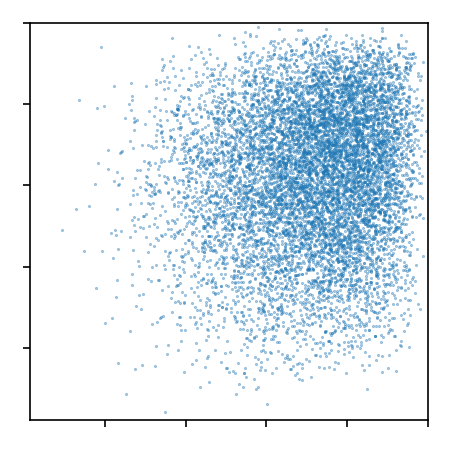}
    \end{subfigure}%
    \end{minipage}
    \centering
    \\
    % angular position row
    \begin{minipage}[c]{0.05\textwidth}
        \centering
        \rotatebox{90}{$p(q, r)$}
    \end{minipage}%
    %% pdf pics
    \begin{minipage}[c]{0.95\textwidth}
    \begin{subfigure}[b]{0.130\linewidth}
        \centering
        \includegraphics[width=\linewidth]{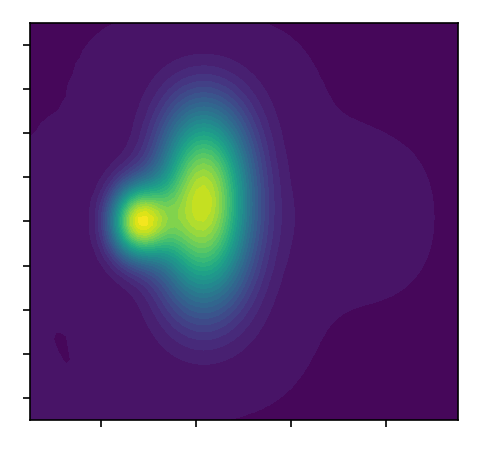}
    \end{subfigure}%
    \begin{subfigure}[b]{0.130\linewidth}
        \centering
        \includegraphics[width=\linewidth]{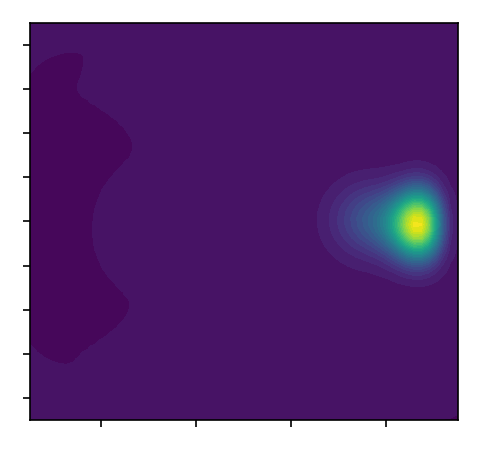}
    \end{subfigure}%
    \begin{subfigure}[b]{0.130\linewidth}
        \centering
        \includegraphics[width=\linewidth]{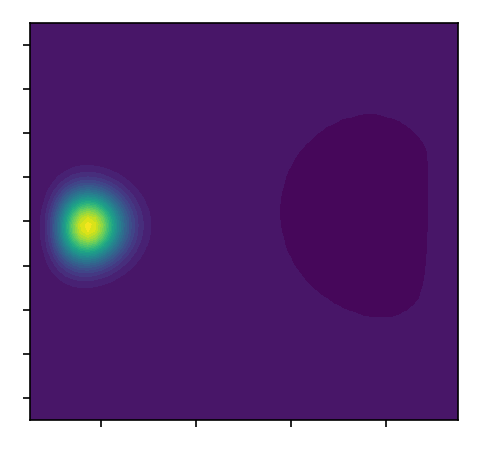}
    \end{subfigure}%
    \begin{subfigure}[b]{0.130\linewidth}
        \centering
        \includegraphics[width=\linewidth]{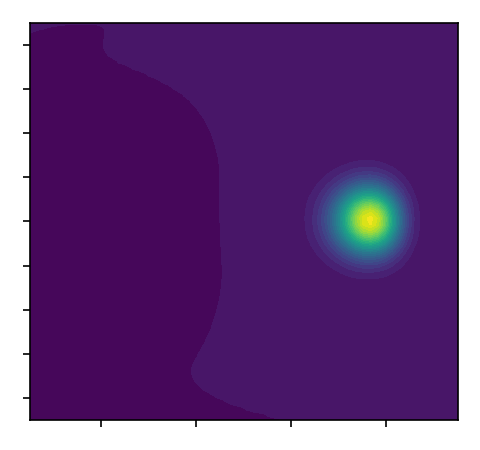}
    \end{subfigure}%
    %% mc pics
    \begin{subfigure}[b]{0.121\linewidth}
        \centering
        \includegraphics[width=\linewidth]{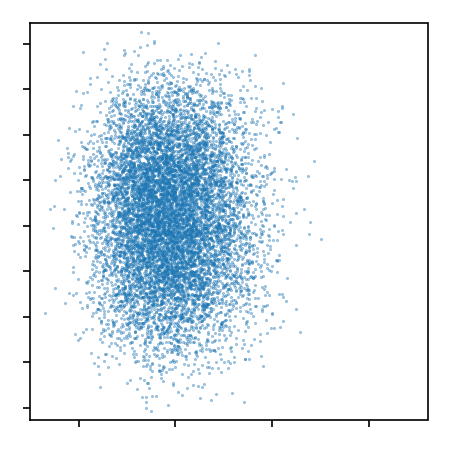}
    \end{subfigure}%
    \begin{subfigure}[b]{0.121\linewidth}
        \centering
        \includegraphics[width=\linewidth]{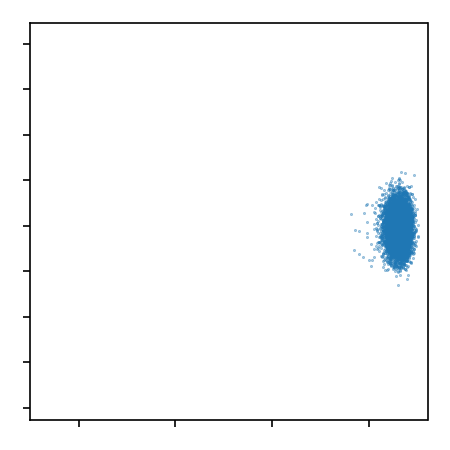}
    \end{subfigure}%
    \begin{subfigure}[b]{0.121\linewidth}
        \centering
        \includegraphics[width=\linewidth]{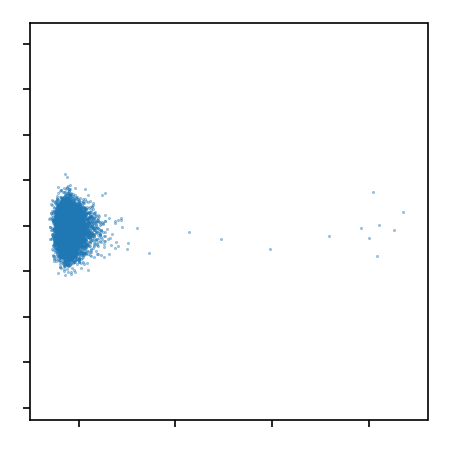}
    \end{subfigure}%
    \begin{subfigure}[b]{0.121\linewidth}
        \centering
        \includegraphics[width=\linewidth]{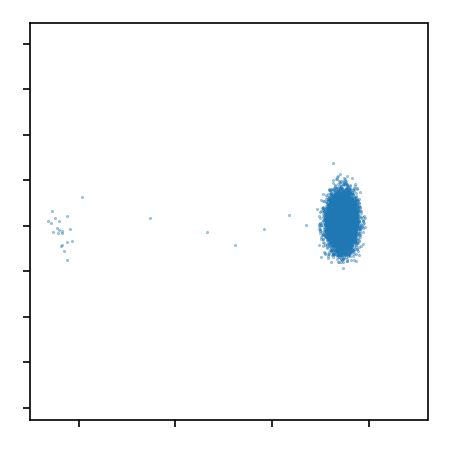}
    \end{subfigure}%
    \end{minipage}
    % velocity row
    \begin{minipage}[c]{0.05\textwidth}
        \centering
        \rotatebox{90}{$p(v_x, v_z)$}
    \end{minipage}%
    \begin{minipage}[c]{0.95\textwidth}
    %% pdf pics
    \begin{subfigure}[b]{0.130\linewidth}
        \centering
        \includegraphics[width=\linewidth]{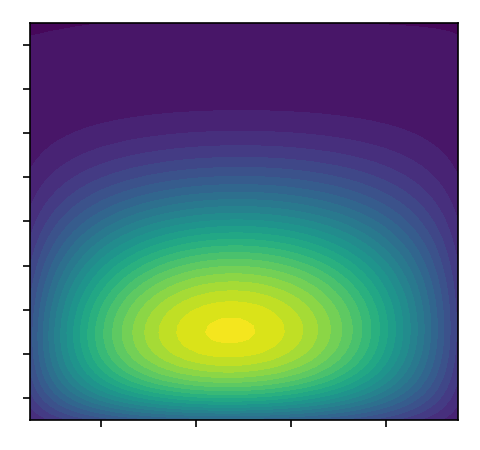}
        \caption*{k = 0}
    \end{subfigure}%
    \begin{subfigure}[b]{0.130\linewidth}
        \centering
        \includegraphics[width=\linewidth]{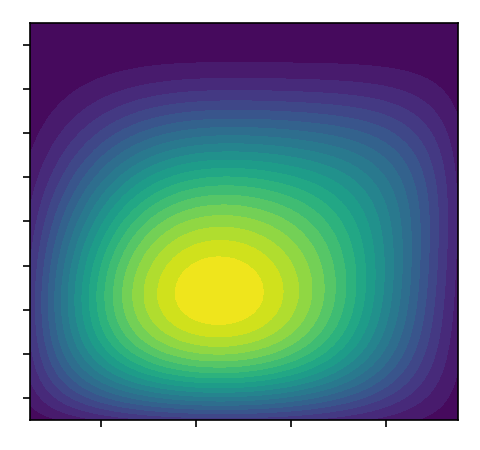}
        \caption*{k = 1}
    \end{subfigure}%
    \begin{subfigure}[b]{0.130\linewidth}
        \centering
        \includegraphics[width=\linewidth]{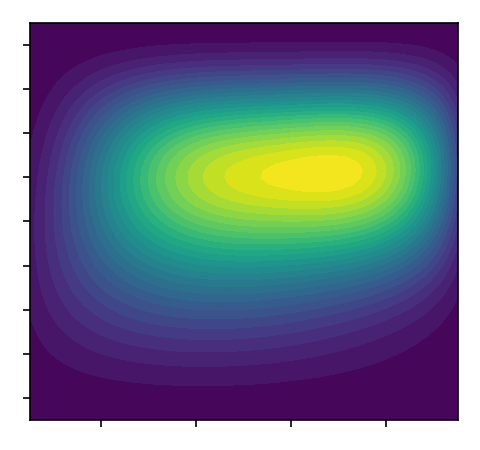}
        \caption*{k = 4}
    \end{subfigure}%
    \begin{subfigure}[b]{0.130\linewidth}
        \centering
        \includegraphics[width=\linewidth]{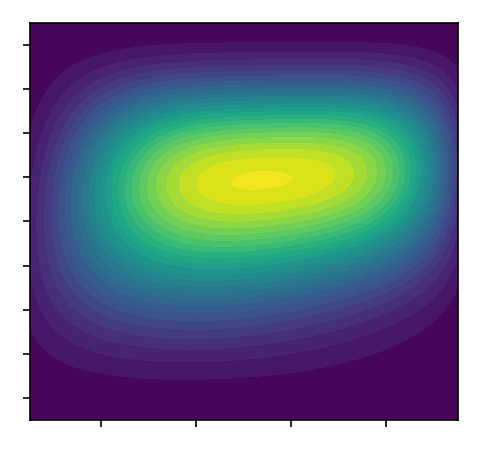}
        \caption*{k = 9}
    \end{subfigure}%
    %% mc pics
    \begin{subfigure}[b]{0.121\linewidth}
        \centering
        \includegraphics[width=\linewidth]{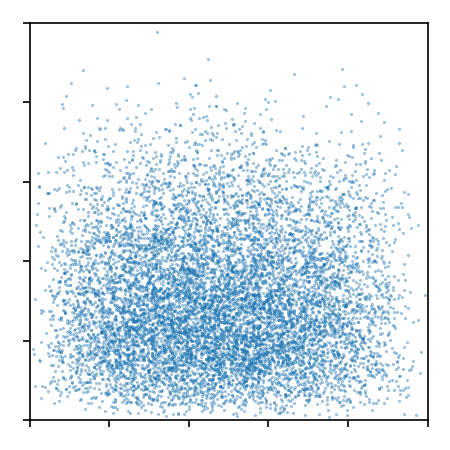}
        \caption*{k = 0}
    \end{subfigure}%
    \begin{subfigure}[b]{0.121\linewidth}
        \centering
        \includegraphics[width=\linewidth]{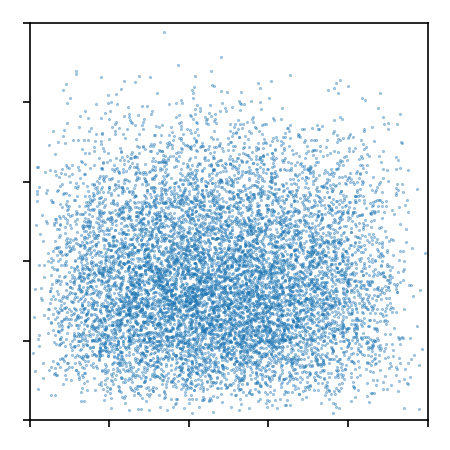}
        \caption*{k = 1}
    \end{subfigure}%
    \begin{subfigure}[b]{0.121\linewidth}
        \centering
        \includegraphics[width=\linewidth]{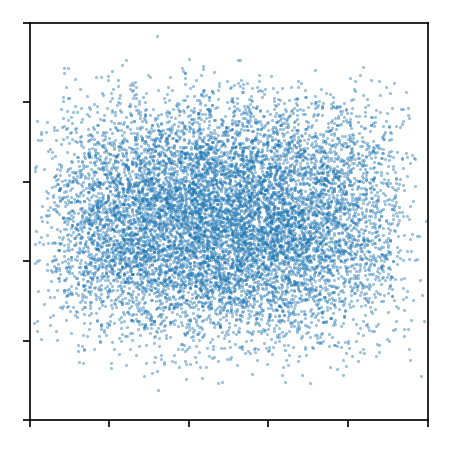}
        \caption*{k = 4}
    \end{subfigure}%
    \begin{subfigure}[b]{0.121\linewidth}
        \centering
        \includegraphics[width=\linewidth]{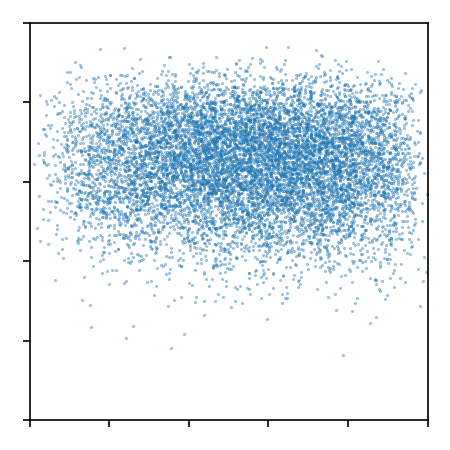}
        \caption*{k = 9}
    \end{subfigure}%
    \end{minipage}
    \caption{Visual Comparison for 12D Quadcopter system. (Left) Learned model. (Right) Monte Carlo simulation (ground truth).
    } \label{fig:12d}
    \label{fig:quadcopter}
\end{figure*}
We analyzed the performance of the model for learning a 6D quadcopter system subject to a state-feedback controller. The quadcopter has non-linear dynamics with additive Gaussian process noise. The state-space is defined as $\px = [x, y, v_x, v_y, \rho, \nu]$ where $x, y$ are 2D planar position components, $\rho, \nu$ are the angular states and $v_x, v_y$ are the planar velocity components. Specifically A state-space transformation was used to learn the Markov process and propagate beliefs in $\uspace^d$. 
To illustrate the power of the sparse model, we chose a relatively small $n=17$ resulting in only 986 parameters for $p(\px' | \px)$ and 493 parameters for $p(\px_0)$.
The initial belief $p(\px_0)$ was trained using 4k data points, and $p(\px' | \px)$ was trained using 40k data points. The full joint belief was propagated; however, for visualization, only pair-wise (analytical) marginal distributions are shown. 

Fig. \ref{fig:quadcopter} shows the propagated marginal beliefs. As can be seen, the RF \sos{} model is able to capture the belief trends of the full 6D system. Similar to mixture models, the optimization is able to successfully allocate basis functions to achieve an expression of the density with very small memory footprint.

\begin{table*}[t]
\centering
\caption{Accuracy comparison to the deep Normalizing Flow model. Average (test) log-likelihood values are shown as mean (std).}
\scriptsize
\resizebox{\textwidth}{!}{%
\begin{tabular}{l|ccccccc}
\hline
Experiment & $k{=}1$ & 3 & 5 & 7 & 9 & 11 & 13 \\
\hline
4D Cartpole (SoS) &
-2.01 (0.0045) & -2.79 (0.0011) & -3.31 (0.0017) & -3.77 (0.0016) &
-4.49 (0.0026) & -5.99 (0.017) & -8.60 (0.11) \\

4D Cartpole (NF) &
-1.88 (0.0019) & -2.49 (0.011) & -3.12 (0.027) & -3.77 (0.056) &
-4.32 (0.050) & -4.86 (0.085) & -5.41 (0.10) \\
\hline

6D Quadrotor (SoS) &
-3.23 (0.22) & -4.56 (0.57) & -5.61 (0.84) & -6.42 (0.82) &
-6.82 (0.92) & -7.06 (1.0) & -7.52 (1.2) \\

6D Quadrotor (NF) &
-4.01 (0.0029) & -5.85 (0.013) & -7.04 (0.033) & -7.93 (0.017) &
-8.69 (0.045) & -9.28 (0.031) & -9.88 (0.076) \\
\hline

6D Dubin's Car w/ Trailer (SoS) &
-4.10 (1.1) & -5.60 (0.79) & -5.84 (0.84) & -6.33 (0.11) &
-6.76 (0.15) & -6.97 (0.16) & -6.97 (0.19) \\

6D Dubin's Car w/ Trailer (NF) &
-3.24 (0.023) & -2.96 (0.049) & -2.28 (0.054) & -3.64 (0.056) &
-4.91 (0.058) & -5.64 (0.085) & -6.07 (0.086) \\
\hline

12D Quadcopter (SoS) &
-9.54 (0.22) & -9.45 (0.024) & -9.36 (0.03) & -9.08 (0.03) &
-9.20 (0.02) & -9.58 (0.01) & -10.3 (0.01) \\

12D Quadcopter (NF) &
-8.15 (0.011) & -7.53 (0.0093) & -8.48 (0.013) & -9.36 (0.12) &
-8.70 (0.0092) & -8.79 (0.0093) & -9.05 (0.010) \\

\hline
\end{tabular}}
\label{tab:nf_comparison}
\end{table*}

\subsection{12D Full-state Quadcopter}
To test the ability for the proposed model to scale to very high-dimensional systems, we performed a propagation experiment on a full 12D Quadcopter model. For this experiment $n=20$ resulting in 1760 parameters for $p(\px' | \px)$ and 880 parameters for $p(\px_0)$. The results can be seen in Fig. \ref{fig:12d} where the position marginals $p(x, y)$, angular rate marginals $p(q, r)$ and velocity marginals $p(v_x, v_y)$ are shown. The RF \sos{} model is able to capture the
general behavior of the belief. In particular, due to a stabilization controller, the oscillation of the angular rates of the quadcopter (seen in the $p(q, r)$ marginals is captured by the belief propagation.

%Compared to the 6D experiment, the RF model does struggle to attain the same fidelity for capturing complex beliefs. This could be due to using higher regularization in order to avoid potential numerical instability with large exponents $\alpha$ and $\beta$. However, these results seem consistent with mixture models: for higher dimensions, the representational capacity decreases with a fixed number of bases. 

\subsection{Comparison to Conditional Deep Generative models}

This section analyzes the efficacy of the method against state-of-the-art deep neural network models capable of learning general conditional state-transition distributions. Such deep network models struggle to propagate belief densities, therefore a particle set is used as the belief approximation. The initial state data is propagated via sampling, then to compare accuracy, a GMM is fitted to each particle belief. In particular, we compare to a conditional masked-affine autoregressive normalizing flow \cite{papamakarios2017masked} with 8 layers each of 128 neurons (1760 total parameters). All \sos{} models used $n=17$ except for the 12D system which used $n=20$. Each experiment was performed 12 times on a fixed train/test data set with randomized training seeds; the mean average log-likelihood (higher is better) with variance in brackets is shown. The GP-based methods generally ran out of memory, and hence, are omitted from these experiments.

As can be seen in Table \ref{tab:nf_comparison}, for moderate dimensions, the SoS method performs comparably to the NF method, even showing benefits in the 6D Quadrotor experiment. However, for the 6D Dubin’s Car experiment, NF outperforms SoS since the transition distribution is very sharp (due to multiplicative noise), and thus is not as well captured by the Beta PDF basis functions used in SoS. This warrants future work on the efficacy of certain basis functions for modeling sharper transition distributions. Additionally, SoS performs slightly worse on 12D, showing that deep methods may currently possess better scalability or representation capacity.

\section{CONCLUSION}
\label{sec:conclusion}
We present a novel approach to modeling Markov processes that (i) can learn general Markov processes, (ii) permit analytical belief propagation, and (iii) achieve a sparse parameter representation, allowing such models to scale to high-dimensional systems. Through an in-depth theoretical analysis, we find that Sum-of-Squares representations alone are highly restrictive for modeling valid conditional distributions while maintaining analytical tractability. The proposed Rational Factor form bypasses such restrictions, modeling valid Markov process models for any choice of basis functions. Our empirical experiments confirm the proposed model's ability to leverage sparsity to scale to systems of up to 12 dimensions.

\ackaccepted{This work was supported in part by the National Science Foundation (NSF) Center for Autonomous Air Mobility \& Sensing (CAAMS) under award 2137269.}

\bibliography{bibliography}

\clearpage
\appendix
\thispagestyle{empty}

% Supplementary material: To improve readability, you must use a single-column format for the supplementary material.
\onecolumn
\aistatstitle{Learning Markov Processes as Sum-of-Square Forms for Analytical Belief Propagation: \\
Supplementary Materials}
\section{PROOFS}

\subsection{Proof of Lemma \ref{lem:fixed_point}}
\begin{proof}
Assuming $\pphi_g = \pphi$, the constraint \eqref{eq:rff_sos} can be expanded into double-sum form

\begin{align}
    &\int_{\px'} \Big( \sum_{k=1}^n \sum_{l=1}^n r_{k, l} \phi_k(\px') \phi_l(\px') \Big) %\notag \\ 
    \Big(\sum_{i=1}^n \sum_{j=1}^n q_{i, j} \phi_i(\px) \psi_i(\px') \phi_j(\px) \psi_j(\px') \Big) d\px' \notag \\
    =& \sum_{i=1}^n \sum_{j=1}^n r_{i, j} \phi_i(\px) \phi_j(\px). \label{eq:app:big_expand}
\end{align}
Rearranging the terms in the integrand and factoring out functions of $\px$, the RHS of \eqref{eq:app:big_expand} becomes
\begin{align}
    &\sum_{i=1}^n \sum_{j=1}^n q_{i, j} \phi_i(\px) \phi_j(\px) \sum_{k=1}^n \sum_{l=1}^n r_{k, l} \; e_{k,l,i,j} \label{eq:app:rhs}
\end{align}
where 
\begin{equation}
    e_{k,l,i,j} = \int_{\px'} \phi_k(\px') \phi_l(\px') \psi_i(\px') \psi_j(\px') d\px'. \label{eq:app:e_def}
\end{equation}
The RHS and LHS of \eqref{eq:app:big_expand} therefore result in linear combinations of bilinear products of the elements in $\pphi(\px)$. Thus, for equality, it is sufficient to match like coefficients of corresponding basis function pairs, i.e. $\phi_i(\px)\phi_j(\px)$:
\begin{align}
    r_{i, j} = q_{i, j} \sum_{k=1}^n \sum_{l=1}^n r_{k, l} e_{k, l, i, j}. \label{eq:app:coeff_match}
\end{align}
Using the matrix vectorization notation of $\pR$ and $\pQ$, we can see that \eqref{eq:app:coeff_match} exactly matches \eqref{eq:fixed_point} when the $((i, j), (k, l))$ element of $\pE$ is defined as \eqref{eq:app:e_def}. In other words, \eqref{eq:app:e_def} corresponds to a four-dimensional tensor when flattened into a matrix with a $(i, j)$ multi-row-index and $(k, l)$ multi-column-index. 
\end{proof}

\section{ADDITIONAL EXPERIMENTAL RESULTS}
%\subsection{6D Planar Quadcopter results (fixed typos)}
%Figure \ref{fig:6d_corrected} shows the corrected results (namely using the correct densities for $p(\rho, \nu)$ for timesteps $k=0, 1, 4$.
%\input{latex_figures/consolidated_exp_6d_corrected}

\subsection{Full 6D Experiment}
Figure \ref{fig:6d_full_evolution} shows the full evolution of the 6D quadcopter system for three marginals.
\begin{figure*}[h!]
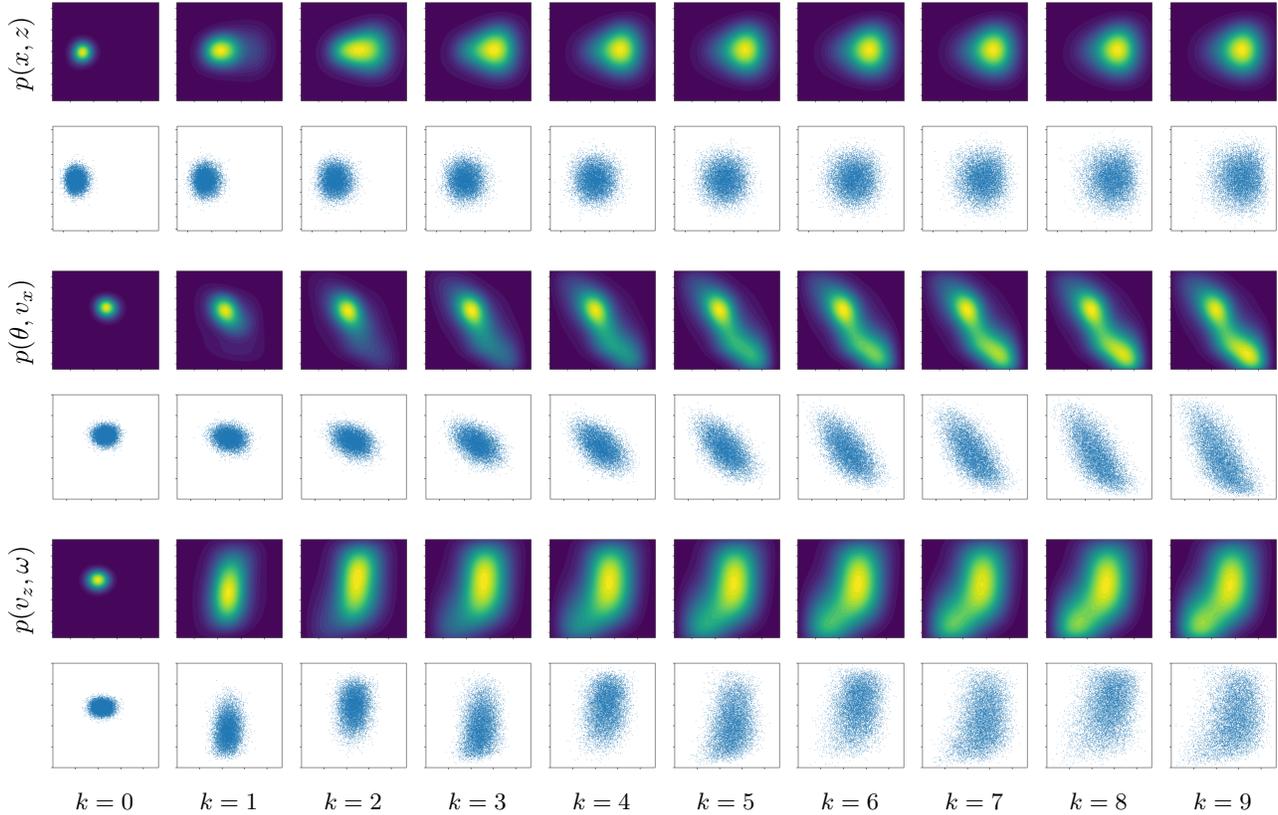

    \centering
    % TikZ foreach required
    % \usepackage{pgffor}
    % \usepackage{graphicx}
    % \usepackage{subcaption}

    % ======================= 1. POSITION MARGINAL =======================
    % Row 1: PDFs
    \begin{minipage}[c]{0.04\textwidth}
        \centering
        \rotatebox{90}{$p(x, z)$}
    \end{minipage}%
    \begin{minipage}[c]{0.96\textwidth}
        \foreach \i/\t in {0/0,1/1,2/2,3/3,4/4,5/5,6/6,7/7,8/8,9/9} {
            \begin{subfigure}[b]{0.095\linewidth}
                \centering
                \includegraphics[width=\linewidth]{figures/quad_6D/pdf/px_pz_t\t.png}
                %\caption*{$k = \i$}
            \end{subfigure}%
        }
    \end{minipage}
    \vspace{1mm}\\
    % Row 2: MC
    \begin{minipage}[c]{0.04\textwidth}
        \centering
        % empty minipage for alignment
        \hspace{5mm}
    \end{minipage}%
    \begin{minipage}[c]{0.96\textwidth}
        \foreach \i/\t in {0/0,1/1,2/2,3/3,4/4,5/5,6/6,7/7,8/8,9/9} {
            \begin{subfigure}[b]{0.095\linewidth}
                \centering
                \includegraphics[width=\linewidth]{figures/quad_6D/mc/px_pz_t\t.png}
                %\caption*{$k = \i$}
            \end{subfigure}%
        }
    \end{minipage}
    \vspace{3mm}\\

    % ======================= 2. ANGULAR MARGINAL =======================
    % Row 3: PDFs
    \begin{minipage}[c]{0.04\textwidth}
        \centering
        \rotatebox{90}{$p(\theta, v_x)$}
    \end{minipage}%
    \begin{minipage}[c]{0.96\textwidth}
        \foreach \i/\t in {0/0,1/1,2/2,3/3,4/4,5/5,6/6,7/7,8/8,9/9} {
            \begin{subfigure}[b]{0.095\linewidth}
                \centering
                \includegraphics[width=\linewidth]{figures/quad_6D/pdf/theta_vx_t\t.png}
                %\caption*{$k = \i$}
            \end{subfigure}%
        }
    \end{minipage}
    \vspace{1mm}\\
    % Row 4: MC
    \begin{minipage}[c]{0.04\textwidth}
        \centering
        \hspace{5mm}
    \end{minipage}%
    \begin{minipage}[c]{0.96\textwidth}
        \foreach \i/\t in {0/0,1/1,2/2,3/3,4/4,5/5,6/6,7/7,8/8,9/9} {
            \begin{subfigure}[b]{0.095\linewidth}
                \centering
                \includegraphics[width=\linewidth]{figures/quad_6D/mc/theta_vx_t\t.png}
                %\caption*{$k = \i$}
            \end{subfigure}%
        }
    \end{minipage}
    \vspace{3mm}\\

    % ======================= 3. VELOCITY MARGINAL =======================
    % Row 5: PDFs
    \begin{minipage}[c]{0.04\textwidth}
        \centering
        \rotatebox{90}{$p(v_z, \omega)$}
    \end{minipage}%
    \begin{minipage}[c]{0.96\textwidth}
        \foreach \i/\t in {0/0,1/1,2/2,3/3,4/4,5/5,6/6,7/7,8/8,9/9} {
            \begin{subfigure}[b]{0.095\linewidth}
                \centering
                \includegraphics[width=\linewidth]{figures/quad_6D/pdf/vz_omega_t\t.png}
                %\caption*{$k = \i$}
            \end{subfigure}%
        }
    \end{minipage}
    \vspace{1mm}\\
    % Row 6: MC
    \begin{minipage}[c]{0.04\textwidth}
        \centering
        \hspace{5mm}
    \end{minipage}%
    \begin{minipage}[c]{0.96\textwidth}
        \foreach \i/\t in {0/0,1/1,2/2,3/3,4/4,5/5,6/6,7/7,8/8,9/9} {
            \begin{subfigure}[b]{0.095\linewidth}
                \centering
                \includegraphics[width=\linewidth]{figures/quad_6D/mc/vz_omega_t\t.png}
                \caption*{$k = \i$}
            \end{subfigure}%
        }
    \end{minipage}

    \caption{Visual comparison for the 6D quadcopter system across timesteps $k = 0 \dots 9$. 
    Beliefs are shown with corresponding Monte Carlo ground truth below.}
    \label{fig:6d_full_evolution}
\end{figure*}

\subsection{Full 12D Experiment}
Figure \ref{fig:12d_full_evolution} shows the full evolution of the 12D stabilizing quadcopter six different marginals.

\begin{figure*}[t!]
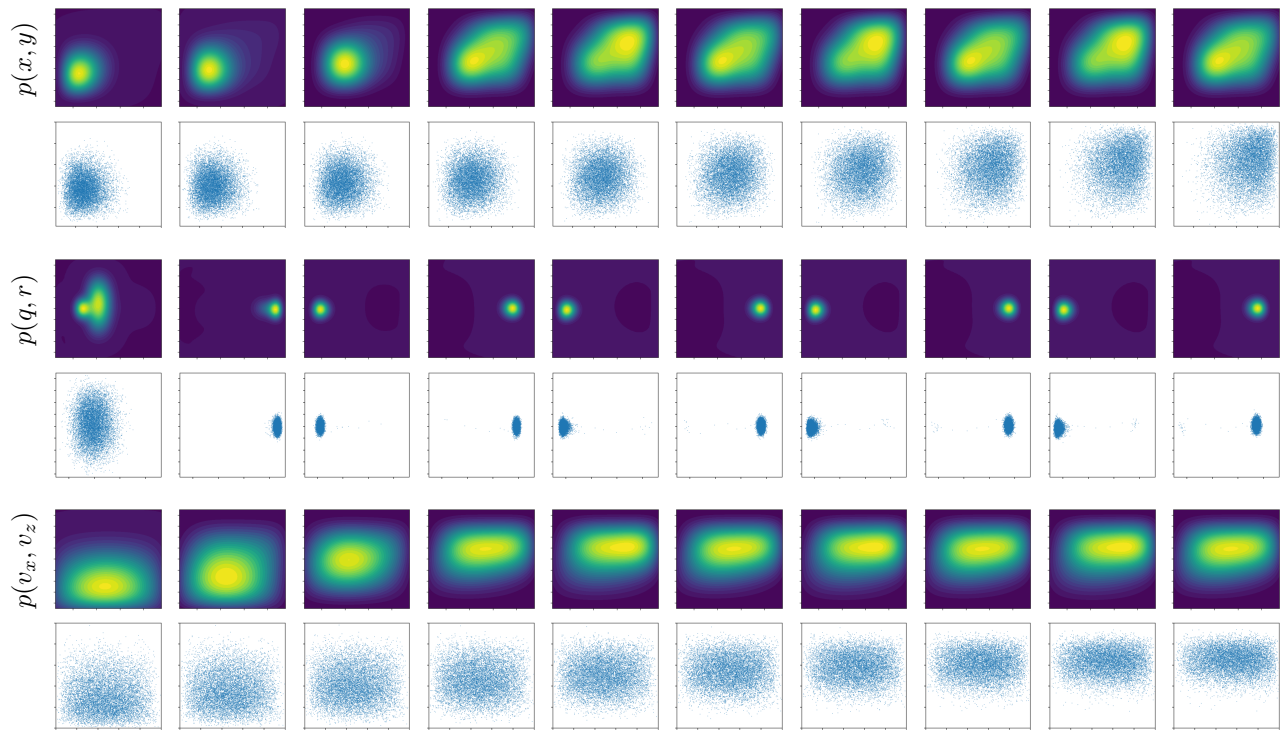

    \centering
    % TikZ foreach required
    % \usepackage{pgffor}
    % \usepackage{graphicx}
    % \usepackage{subcaption}

    % Row 1: PDFs
    \begin{minipage}[c]{0.04\textwidth}
        \centering
        \rotatebox{90}{$p(x, y)$}
    \end{minipage}%
    \begin{minipage}[c]{0.96\textwidth}
        \foreach \i/\t in {0/0,1/1,2/2,3/3,4/4,5/5,6/6,7/7,8/8,9/9} {
            \begin{subfigure}[b]{0.095\linewidth}
                \centering
                \includegraphics[width=\linewidth]{figures/quad_12D/pdf/px_py_t\t.png}
                %\caption*{$k = \i$}
            \end{subfigure}%
        }
    \end{minipage}\\
    % \vspace{1mm}\\
    % Row 2: MC
    \begin{minipage}[c]{0.04\textwidth}
        \centering
        % empty minipage for alignment
        \hspace{5mm}
    \end{minipage}%
    \begin{minipage}[c]{0.96\textwidth}
        \foreach \i/\t in {0/0,1/1,2/2,3/3,4/4,5/5,6/6,7/7,8/8,9/9} {
            \begin{subfigure}[b]{0.095\linewidth}
                \centering
                \includegraphics[width=\linewidth]{figures/quad_12D/mc/px_py_t\t.png}
                %\caption*{$k = \i$}
            \end{subfigure}%
        }
    \end{minipage}
    \vspace{2mm}\\

    % Row 3: PDFs
    \begin{minipage}[c]{0.04\textwidth}
        \centering
        \rotatebox{90}{$p(q, r)$}
    \end{minipage}%
    \begin{minipage}[c]{0.96\textwidth}
        \foreach \i/\t in {0/0,1/1,2/2,3/3,4/4,5/5,6/6,7/7,8/8,9/9} {
            \begin{subfigure}[b]{0.095\linewidth}
                \centering
                \includegraphics[width=\linewidth]{figures/quad_12D/pdf/q_r_t\t.png}
                %\caption*{$k = \i$}
            \end{subfigure}%
        }
    \end{minipage}
    % \vspace{1mm}
    \\
    % Row 4: MC
    \begin{minipage}[c]{0.04\textwidth}
        \centering
        \hspace{5mm}
    \end{minipage}%
    \begin{minipage}[c]{0.96\textwidth}
        \foreach \i/\t in {0/0,1/1,2/2,3/3,4/4,5/5,6/6,7/7,8/8,9/9} {
            \begin{subfigure}[b]{0.095\linewidth}
                \centering
                \includegraphics[width=\linewidth]{figures/quad_12D/mc/q_r_t\t.png}
                %\caption*{$k = \i$}
            \end{subfigure}%
        }
    \end{minipage}
    \vspace{2mm}\\

    \begin{minipage}[c]{0.04\textwidth}
        \centering
        \rotatebox{90}{$p(v_x, v_z)$}
    \end{minipage}%
    \begin{minipage}[c]{0.96\textwidth}
        \foreach \i/\t in {0/0,1/1,2/2,3/3,4/4,5/5,6/6,7/7,8/8,9/9} {
            \begin{subfigure}[b]{0.095\linewidth}
                \centering
                \includegraphics[width=\linewidth]{figures/quad_12D/pdf/vx_vz_t\t.png}
                %\caption*{$k = \i$}
            \end{subfigure}%
        }
    \end{minipage}
    % \vspace{1mm}
    \\
    \begin{minipage}[c]{0.04\textwidth}
        \centering
        \hspace{5mm}
    \end{minipage}%
    \begin{minipage}[c]{0.96\textwidth}
        \foreach \i/\t in {0/0,1/1,2/2,3/3,4/4,5/5,6/6,7/7,8/8,9/9} {
            \begin{subfigure}[b]{0.095\linewidth}
                \centering
                \includegraphics[width=\linewidth]{figures/quad_12D/mc/vx_vz_t\t.png}
            \end{subfigure}%
        }
    \end{minipage}
    \vspace{2mm}\\

    \caption{Visual comparison for the 12D stabilizing quadcopter system across timesteps $k = 0 \dots 9$. 
    Beliefs are shown with corresponding Monte Carlo ground truth below. The full state definition is $\px = [x, y, z, \phi, \theta, \psi, v_x, v_y, v_z, \omega_p, \omega_q, \omega_r]$ with position $(x,y,z)$ and Euler angles $(\phi, \theta, \psi)$. Position marginals $p(x, y)$, angular velocity marginals $p(r, q)$ and translational velocity marginals $p(v_x, v_z)$ are shown.}
    \label{fig:12d_full_evolution}
\end{figure*}

\subsection{Experimental Details}
Each state-space was converted to $\uspace^d$ via mapping each state-space dimension independently through a Gaussian cumulative distribution function. For more technical details, as well as how each transformation can be chosen, refer to \cite{amorese2026universal}. The full state-space dynamics equations can be found in the supplementary code in \cite{github}, with the used parameters.

Regularization loss was applied in the form
\begin{equation}
    \mathcal{L}_{reg} = c_{reg} \sum_i \sum_m \big(\alpha_{i,m}^2 + \beta_{i, m}^2 \big).
\end{equation}
A regularization weight of $c_{reg}=10^{-4}$ with $\alpha, \beta \in [0.4, 400.0]$ 
%. The 12D experiment required a higher regularization loss of $c_{reg}=1e-2$ to prevent blow up of large exponents of \eqref{eq:beta_pdf}. 
%For the 6D experiment, all $\alpha, \beta \in [0.1, 400.0]$ for the transition model and $\alpha, \beta \in [0.4, 100.0]$ for the initial state model. For the 12D experiments, all $\alpha, \beta \in [0.2, 50.0]$ for both models.

All models were trained on GPU hardware, with training split between a NVIDIA RTX A5000 and NVIDIA GeForce RTX 2070. All models were trained in under two hours, and belief propagation was performed on the CPU in under three seconds.

\end{document}